%% file: arxiv_preprint.tex
\documentclass{article}

\usepackage{microtype}
\usepackage{graphicx}
\usepackage{subcaption}
\usepackage{booktabs} 

\usepackage{hyperref}

\usepackage[preprint]{icml2026}

\usepackage{hyperref}
\usepackage{url}
\usepackage{amsmath}
\usepackage{float}
\usepackage{amssymb}
\usepackage{amsthm}
\usepackage{booktabs}       
\usepackage{amsfonts}       
\usepackage{nicefrac}       
\usepackage{microtype}      

\newcommand{\architecturename}{Neural-HSS}
\newcommand{\HSS}{\text{HSS}}
\newcommand{\Real}{\mathbb R}

\usepackage{xcolor}         
\usepackage{tabularx}
\usepackage{amsmath}
\usepackage{tabularx} 
\usepackage{booktabs} 
\usepackage{array}    
\usepackage{tablefootnote}
\usepackage{amssymb}
\usepackage{mathtools}
\usepackage{amsthm}
\usepackage{enumitem}
\usepackage{hyperref}       
\usepackage[capitalise]{cleveref} 
\usepackage{wrapfig}
\usepackage{mathrsfs}

\newcommand{\virg}[1]{``{#1}''} 
 
\newcommand{\black}[1]{\textcolor{black}{#1}} 
\usepackage{natbib}

\usepackage{multirow}
\usepackage[utf8]{inputenc} 
\usepackage[T1]{fontenc}    
\usepackage{url}            
\usepackage{booktabs}       
\usepackage{amsfonts}       
\usepackage{nicefrac}       
\usepackage{microtype}      
\usepackage{xcolor}         
\usepackage{cleveref}
\usepackage{dsfont}
\usepackage{mathtools}
\usepackage{booktabs} 
\usepackage{relsize}
\usepackage{graphicx}

\input{math_commands.tex}

\theoremstyle{plain}
\newtheorem{theorem}{Theorem}[section]

\theoremstyle{definition}
\newtheorem{definition}[theorem]{Definition}

\theoremstyle{remark}

\usepackage[textsize=tiny]{todonotes}

\icmltitlerunning{\architecturename{}: Hierarchical Semi-Separable Neural PDE Solver}

\begin{document}

\twocolumn[  \icmltitle{\architecturename{}: Hierarchical Semi-Separable Neural PDE Solver
}



  \icmlsetsymbol{equal}{*}

  \begin{icmlauthorlist}
    \icmlauthor{Pietro Sittoni}{equal,yyy}
    \icmlauthor{Emanuele Zangrando}{equal,yyy}
    \icmlauthor{Angelo Alberto Casulli}{yyy}
    \icmlauthor{Nicola Guglielmi}{yyy}
    \icmlauthor{Francesco Tudisco}{yyy,comp,sch}
  \end{icmlauthorlist}

  \icmlaffiliation{yyy}{Gran Sasso Science Institute, L'Aquila, Italy.}
  \icmlaffiliation{comp}{University of Edinburgh, UK.}
  \icmlaffiliation{sch}{Miniml.AI}

  \icmlcorrespondingauthor{Pietro Sittoni}{pietro.sittoni@gssi.it}
  \icmlcorrespondingauthor{Emanuele Zangrando}{emanuele.zangrando@gssi.it}

  \icmlkeywords{Machine Learning, ICML}

  \vskip 0.3in
]



\printAffiliationsAndNotice{}  

\begin{abstract}
Deep learning-based methods have shown remarkable effectiveness in solving PDEs, largely due to their ability to enable fast simulations once trained. However, despite the availability of high-performance computing infrastructure, many critical applications remain constrained by the substantial computational costs associated with generating large-scale, high-quality datasets and training models. In this work, inspired by studies on the structure of Green's functions for elliptic PDEs, we introduce \architecturename{}, a \textbf{parameter-efficient} architecture built upon the Hierarchical Semi-Separable (HSS) matrix structure that is provably \textbf{data-efficient} for a broad class of PDEs. We theoretically analyze the proposed architecture, proving that it satisfies exactness properties even in very low-data regimes. We also investigate its connections with other architectural primitives, such as the Fourier neural operator layer and convolutional layers. We experimentally validate the data efficiency of \architecturename{} on the three-dimensional Poisson equation over a grid of two million points, demonstrating its superior ability to learn from data generated by elliptic PDEs in the low-data regime while outperforming baseline methods. Finally, we demonstrate its capability to learn from data arising from a broad class of PDEs in diverse domains, including electromagnetism, fluid dynamics, and biology.
\end{abstract}
\section{Introduction and Related Work}
Machine learning is emerging as a powerful tool for accurately simulating complex physical phenomena \citep{brandstetter2022message, li2021fourier, boulle2023elliptic}. Unlike traditional numerical methods, which rely on explicit mathematical models and require problem-specific implementations for spatial resolution, timescales, domain geometry, and boundary conditions, machine learning models learn directly from data---either from simulations or real-world observations---enabling greater flexibility and scalability. Moreover, these models leverage ongoing advances in GPU acceleration and large-scale parallelization, with continued improvements in both accuracy and efficiency.

A wide range of architectural primitives has been explored for modeling different physical systems, each leveraging structural biases that arise in the solution operator of specific classes of PDEs. As prominent examples, the Fourier neural operator \citep{li2021fourier} learns a kernel integral operator through convolution in Fourier space, enabling efficient representation of global interactions; message passing neural networks \citep{brandstetter2022message} capture localized interactions via graph-based message updates; U-Net ConvNets \citep{gupta2022towards} exploit multiscale representations to couple fine- and coarse-scale features; while operator transformers \citep{hao2023gnot} leverage transformers to handle challenging settings such as irregular meshes. Other methods directly approximate the action of linear operators in the elliptic setting \citep{boulle2023elliptic}, bypassing explicit discretization of the underlying equations. Moreover, \cite{boulle2023elliptic} show that the underlying learning problem is well-posed for a very small number of datapoints and prove that the class of elliptic PDEs is \virg{data efficient}: the number of training points needed to learn the solution operator depends only on the problem dimension. This effect is due to the nature of the solution operator for elliptic-type PDEs, which is highly structured: the Green function $G(x,y)$ associated with the solution operator is a Hierarchical Semi-Separable (\HSS) mapping that exhibits low rank when restricted to off-diagonal subdomains.

In fact, the Multipole Graph Neural Operator employed in \citep{boulle2023elliptic} is motivated by a hierarchical structure observed in Green functions \citep[Section 2.4]{boulle2024mathematical}, known as the Hierarchical Off-Diagonal Low-Rank (HODLR) structure. In an HODLR matrix, off-diagonal blocks at different scales are well-approximated by low-rank factorizations. When used within a neural architecture, this flexible representation allows the network to capture both near-field and far-field interactions in a multiscale fashion and is one of the key factors behind the data-efficiency property of the neural architecture.

The HSS structure is a special case of the more general HODLR format, a hierarchical domain-decomposition strategy that has been widely studied in the scientific computing literature for developing fast and memory-efficient solvers for elliptic-type PDEs, e.g., \citep{martinsson2005fast, gillman2012direct}. Compared to HODLR, the HSS model enforces a nested basis across levels in addition to low-rank off-diagonal blocks. This nested parameterization greatly reduces redundancy, yielding a more compact representation and faster matrix--vector products, improvements that are not available in the plain HODLR structure.

In this work, we propose \architecturename{}, a neural architecture that injects this type of structure into a neural network model for PDE learning. Thanks to the efficient data representation of \HSS{}, the proposed model is able to approximate solution operators with far fewer parameters than baseline models while retaining better or comparable accuracy.

The geometric structure of \HSS{} operators imposes an inductive bias that concentrates modeling capacity on local interaction effects in the physical system---modeled through full-rank submatrices---while approximating interaction effects between distant subdomains using low-rank matrices, resembling a mean-field approximation. This modeling strategy underlies many popular neural PDE solvers. In fact, architectures such as ResNets \citep{he2016deep}, Swin Transformers \citep{liu2021swintransformerhierarchicalvision}, and Message Passing Neural Networks \citep{gilmer2017neural} devote the majority of their representational capacity to modeling local interactions. This bias aligns well with the nature of PDE dynamics, in which the dominant behavior is driven by localized interactions, in contrast to integro-differential equations, which can incorporate global interaction effects. The effect of this implicit bias was also highlighted in \citep{holzschuh2025pdetransformer}, where the authors demonstrate that in a Swin-Transformer-based model, increasing the attention window size leads to a rapid performance plateau.

Overall, our main contributions are as follows:
\begin{itemize}
    \item We propose \architecturename{}, a \textbf{parameter-efficient} neural architecture with a novel type of layer inspired by \HSS{} theory for PDEs \citep{hackbusch2015matrices}, and we establish its universal approximation property.
    \item We prove that the proposed architecture is \textbf{exact} and \textbf{data-efficient} on the class of elliptic PDEs, i.e., (a) the global minimizers of the empirical loss represent the discretized solution operator exactly, and (b) the number of data points required to learn the exact operator depends only on the intrinsic dimensionality of the problem. To the best of our knowledge, this is the first result for an architecture that can handle arbitrary types of PDEs while guaranteeing exactness and data efficiency for a broad class of PDEs.
    \item We highlight an intriguing connection between the proposed architecture and a discretized version of the convolutional-type layer used in FNO architectures, showing that an \HSS{} layer can approximate it arbitrarily well with very few parameters.
    \item Under the assumptions of \cref{thm:global_minima}, which match the setting of \citep{boulle2023elliptic}, \black{we conduct two experiments in $1$D and $3$D. The $3$D experiment is performed on a grid with $2$M points, a well-known challenge for machine learning models. In both cases, we demonstrate the superior performance and scalability of \architecturename{}.}
    \item We conduct extensive experiments on a broad class of PDEs arising from electromagnetism, fluid dynamics, and biology, showcasing strong performance relative to commonly used architectural primitives such as ResNet and FNO layers in terms of parameter efficiency, test error, and computational time. \black{In particular, this last advantage becomes more evident for higher-dimensional PDEs.} Moreover, we demonstrate the effectiveness of \architecturename{} beyond elliptic PDEs, showing that the same architecture is also effective for different classes of nonlinear PDEs.
\end{itemize}

\paragraph{Related Work.}  There has been a surge of interest in learning-based approaches for improving classical solvers for linear PDEs. Several works focus on accelerating iterative methods such as Conjugate Gradient for symmetric positive definite systems \citep{li2023learning, pmlr-v202-kaneda23a, zhang2023artificial}, or GMRES-type solvers for specific applications like the Poisson equation \citep{luna2021acceleratinggmresdeeplearning}. Others focus on learned preconditioning strategies: neural networks have been used to construct preconditioners that speed up convergence \citep{pmlr-v97-greenfeld19a, pmlr-v119-luz20a, 10.5555/3540261.3541189}, or to optimize heuristics such as Jacobi and ILU variants \citep{10.1145/3441850, stanaityte2020ilu}. NeurKItt \citep{luo2024neural}, for example, employs a neural operator to predict the invariant subspace of the system matrix and accelerate solution convergence. However, these approaches are primarily designed to enhance classical linear numerical pipelines. In contrast, our work takes a fundamentally different perspective: we aim to learn an end-to-end solver.

Moreover, recent work has focused on learning low-dimensional latent representations of PDE states for efficient simulation, extending classical projection-based reduction \citep{benner2015survey} with deep learning methods such as autoencoders \citep{wiewel2019latent, maulik2021reduced}, graph embeddings \citep{han2022predicting}, and implicit neural representations \citep{ chen2022crom}. Koopman-inspired methods enforce linear latent dynamics \citep{geneva2022transformers, yeung2019koopman}, while latent neural solvers and transformer-based ROMs have been developed for end-to-end modeling \citep{li2025latent, hemmasian2023reduced}. In addition, \citep{matstructure_2} introduces a hierarchical \cite{matstructure} network to model the nonlinear Schrödinger equation. Another key challenge remains long-horizon stability, motivating strategies like autoregressive training, architectural constraints, and spectral or stochastic regularization \citep{geneva2020autoregressive, mccabe2023stability, stachenfeld2022learned}. More recently, generative models, in particular using diffusion-based models, have shown promise for stable rollout and data assimilation by producing statistically consistent trajectories \citep{shysheya2024conditional, li2025generative, andry2025appa}. 

The use of structured matrices in neural network architectures is also highly relevant and used across different areas of deep learning. Recent approaches include the use of low-rank \citep{schotthoefer2022lowrank,zangrando2024geometry} and hierarchical decompositions \citep{fan_multiscale,kissel_structuredsurvey}, low-displacement rank \citep{thomas2019learningcompressedtransformslow,zhao2017theoreticalpropertiesneuralnetworks,choromanski2024fast}, butterflies and monarchs \citep{fu2023monarch,dao2019learning,dao2022monarch}. 

\section{Setting and Model}
In this section, we present the necessary definitions and theoretical motivations for our proposed architecture. We start with the following formal definition of \HSS{} structure:

\begin{definition}(\HSS{} structure \citep[Definition~3.1]{casulli2024computing})\label{def:telesc-decomp}
            Let $\mathcal{T}$ be a cluster tree of depth $L$ for the indices $[1,\dots,d]$. A matrix
            $A\in \R^{d\times d}$ belongs to $\HSS{}(r, \mathcal{T})$ or simply $\HSS{}(r)$ if there exist real matrices 
            \begin{equation*}
                \{U_{\tau},V_{\tau}:\tau\in \mathcal{T}, \, 1\le \mathrm{depth}(\mathcal T) \}\quad \mathrm{and} \quad \{D_{\tau}:\tau\in \mathcal{T}\}
            \end{equation*}
            called \textit{telescopic decomposition} and for brevity denoted by $\{U_{\tau},V_{\tau}, D_{\tau}\}_{\tau \in \mathcal{T}}$ or simply $\{U_{\tau},V_{\tau}, D_{\tau}\}$, with the following properties:
            \begin{enumerate}
            \item $D_\tau$ is of size $|\tau|\times |\tau|$ if $\mathrm{depth}(\tau)=L$ and $2r \times 2r$ otherwise;
            \item $U_\tau, V_\tau$ 
            are of size $|\tau|\times r$ if $\mathrm{depth}(\tau)=L$ and $2r \times r$ otherwise;
            \item if $L=0$ (i.e., $\mathcal{T}$ consists only of the root $\gamma$) then $A = D_{\gamma}$;
\item if $L\ge 1$ then $A = \boldsymbol D^{(L)}+\boldsymbol U^{(L)}A^{(L-1)} (\boldsymbol V^{(L)})^T$
            where 
            \begin{align*}
            \boldsymbol U^{(L)}&:=\mathrm{blkdiag}(U_\tau: \, \tau \in \mathcal T,  \mathrm{depth}(\tau)=L), \\
            \boldsymbol V^{(L)}&:=\mathrm{blkdiag}(V_\tau: \, \tau \in \mathcal T,  \mathrm{depth}(\tau)=L), \\
            \boldsymbol D^{(L)}&:=\mathrm{blkdiag}(D_\tau: \, \tau \in \mathcal T,  \mathrm{depth}(\tau)=L) 
            \end{align*}
            and the matrix $A^{(L-1)} := (\boldsymbol U^{(L)})^T(A-\boldsymbol D^{(L)}) \boldsymbol V^{(L)}$
            has the telescopic decomposition $\{U_{\tau},V_{\tau}, D_{\tau}\}_{\tau \in \mathcal {T}_{2r}^{(L-1)}}$, where $\mathcal{T}_{2r}^{(L-1)}$ denotes a balanced cluster tree of depth $L-1$ for the indices $[1, \dots, 2^L r]$, see \Cref{def:Perfect_balance_bt}.
        \end{enumerate}
\end{definition}

This definition is a natural extension of the idea of a low-rank linear operator, when the matrix is not globally low-rank but admits a low-rank expansion on subdomains that do not intersect. 

The utility and efficiency of these structures become immediately clear when considering elliptic-type PDEs.
In fact, when one discretizes an elliptic partial differential operator---say by finite differences, finite elements, or boundary integral methods---a large algebraic system $Au = f$ arises, where $A$ is associated with discretized Green's functions. It turns out that many of the off‐diagonal blocks of $A$ have low numerical rank because of the smoothing properties of elliptic operators, making distant interactions "weak" and therefore inducing a decay in the singular values, which depends on the regularity of the kernel \citep{bebendorf2000approximation, bebendorf2003existence}. HSS matrices exploit exactly this feature by organizing the matrix into a hierarchical block structure in which off‐diagonal blocks are approximated by low‐rank factorizations. Thus, HSS-based models are used to design fast numerical solvers or preconditioners for elliptic PDEs by efficiently approximating the operator, thereby reducing storage and computational complexity \citep{borm2010efficient, borm2005hybrid, gillman2012direct}. As the \HSS{} structure is invariant under matrix inversion, these properties demonstrate that \HSS{} operators can effectively model the structure of the solution operator and offer a powerful modeling primitive for a neural PDE solver.

Based on this key observation, we propose below the \architecturename{} model.

\subsection{Model Overview} \label{sec:model_overview}

In this section, we present the proposed \architecturename{} architecture, as illustrated in \Cref{fig:model_diagram}.

\begin{figure*}[t]
    \centering
    \includegraphics[width=1\linewidth]{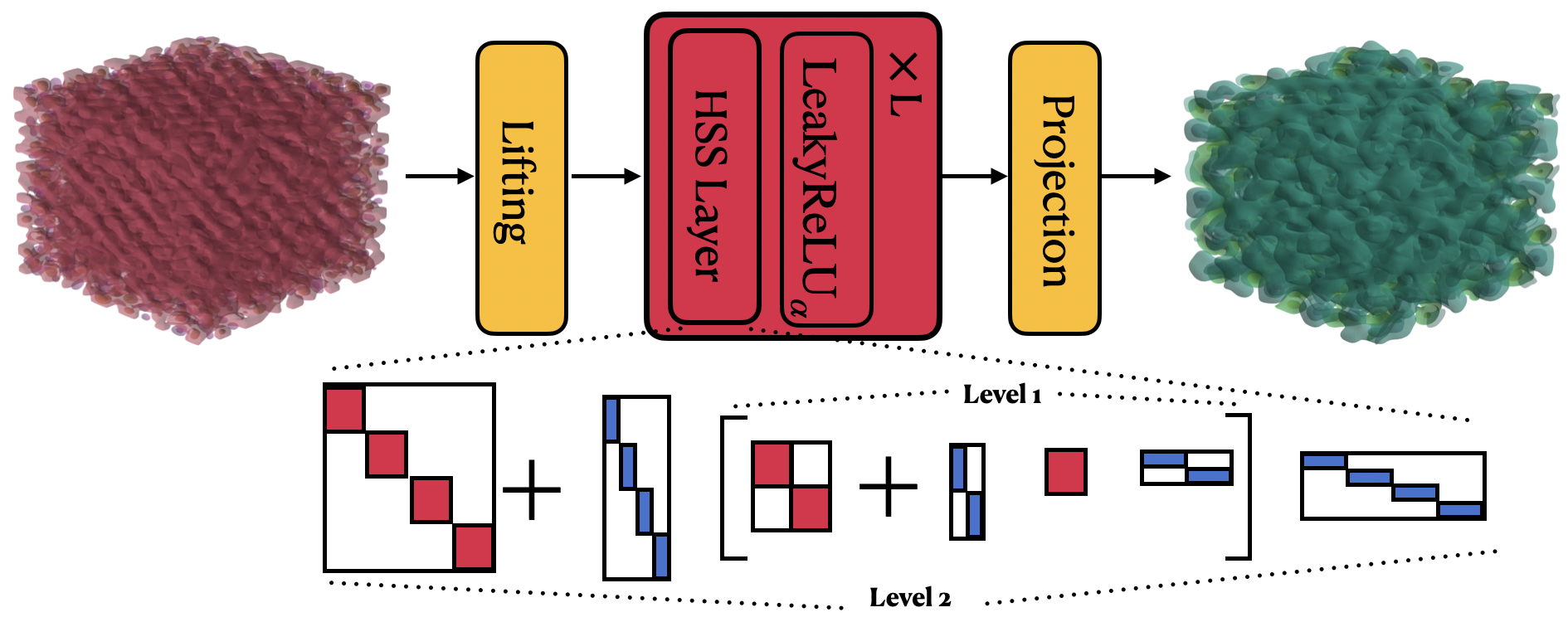}
    \caption{\textbf{Model overview.} The lifting and projection layers can be implemented either as \HSS{} layers or as full-rank layers. We illustrate, as an example, the weight matrix structure with two different hierarchical levels.}
    \label{fig:model_diagram}
\end{figure*}

\paragraph{One-dimensional \HSS{} layer.}
For $1$-dimensional problems, the \HSS{} layer consists of an \HSS{} structured matrix followed by a nonlinear activation. 
In \Cref{sec:forward_pass} we describe the forward pass in more detail, along with input and output channels of the layers, the number of levels, and the rank of the layer. The rank controls the size of the low-rank coupling matrices between sub-blocks, enabling efficient compression of off-diagonal interactions. The number of levels determines the recursion depth, i.e., how many hierarchical splits the input undergoes. Clearly, the \HSS{} layer is fully compatible with the backpropagation algorithm. The structure of the layer allows us to stack multiple layers, enabling the construction of deeper architectures.

\paragraph{Activation function.} 
We employ the $\mathrm{LeakyReLU}_{\alpha}$ activation. $\alpha$ is typically a tunable hyperparameter; however, in our implementation, $\alpha$ is learnable. In particular, if the underlying PDE is linear, the model adapts $\alpha \to 1$, effectively recovering the identity function. Instead, for nonlinear PDEs, $\alpha$ can deviate from 1 to capture the nonlinearity where necessary. This activation function is also relevant because it fulfills the exactness guarantees presented in \Cref{thm:global_minima}. While this is the activation of choice in our implementation, we emphasize that this choice is not restrictive: other activation functions can be used if better suited for the problem at hand. We also highlight that as long as the activation acts entrywise, the \HSS-induced structural bias is maintained as the topology of the interlayer connections is not affected.

\paragraph{$m$-dimensional \HSS{} layer.} The complex hierarchical architecture of \HSS{} operators in higher dimensions poses significant challenges, as it depends on the geometry of a partition of the domain, which can be arbitrarily complex. For this reason, different possible models can be used to extend the \HSS{} structure to higher-dimensional tensors \cite{hackbusch2015matrices}. 
Here we extend the \HSS{} layer to higher-dimensional PDEs by parametrizing each layer as a high-dimensional tensor obtained as a low-rank expansion of the outer product of one-dimensional \HSS{} layers. More precisely, an $m$-dimensional \HSS{} layer is a tensor of outer CP-rank $r_{\text{out}}$ parametrized as follows:
\begin{align}\label{eq:ndimHSS_layer}
\begin{split}
    &  H^m_{\theta}: \Real^{\text{\scriptsize$\overbrace{d \times \dots \times d}^m$}} \to \Real^{d \times \dots \times d}, \\  &H^m_{\theta}(\mathcal Z):= \sum_{k=1}^{r_{\text{out}}} \mathcal Z  \bigtimes_{j=1}^m W^{(k)}_j ,\\
    & W_j^{(k)} \in \HSS{}(r_{k,j})\subset \Real^{d \times d},\\ 
    &\theta = (W_j^{(k)})_{j,k}\,\, \text{trainable parameters.}. 
\end{split}
\end{align}
Where the definition of modal product $\bigtimes$ is recalled in \Cref{def:modal_product} and $\mathcal Z$ is the layer's input tensor.
This model is motivated by the effectiveness of the Canonic Polyadic decomposition in representing very high-dimensional tensors with a small number of variables \citep{hitchcock,lebedev2015speedingupconvolutionalneuralnetworks}. In fact, while the parameter count on a generic linear map between tensors with $m$ modes would scale as $O(d^{2m})$, the memory for this $m$-dimensional \HSS{} layer scales as $O(r_{out}mrd)$ when $r_{k,j}$ are all equal to $r$. Thus, the architecture's parameter-efficiency increases as the dimensionality of the PDE grows.

\subsection{Theoretical Results}
In this section, we present our main theoretical results, showing that the proposed \architecturename{} enjoys some useful properties.

First of all, we notice in the next Theorem that the \HSS{} structure is efficient in representing convolutional-type operators in which the kernel is regular. Thus, each \HSS{} layer in the proposed model can be interpreted as a generalization of a (discretized) convolutional-type linear layer, such as those implemented in popular FNO or CNO architectures \citep{li2021fourier,raonic2023convolutional}, as well as classical convolutional filters. 

\begin{theorem}(Convolutional kernels are \HSS{} approximable)\label{prop:conv_is_hss}
    Let $D\geq 1$,  $\Omega\subseteq \mathbb R^D$ be a compact set, let $k:\Omega \to \mathbb R$ be an asymptotically smooth convolutional kernel (\Cref{def:asympt_smoothness}). Consider the operator
    \begin{align*}
        &T: \mathcal C^0(\Omega; \Real) \to \mathcal C^0(\Omega; \Real),\\ 
        &(Tf)(x) = \int_{\Omega} k(x-y)f(y)\,dy.
    \end{align*}
    For a set of basis functions $\{\phi_j\}_{j=1}^K$, consider the discretization matrix
\[
A_{ij} = \int_\Omega k(x - y) \,\phi_i(x)\, \phi_j(y) \,dx\, dy, \quad i,j = 1,\dots,K.
\]
Then, for every $\eta$-admissible pair (\Cref{def:strong_admissibility}) of well-separated clusters $\tau,\tau'$, there exist $r = O(\log(1/\varepsilon)^D)$ such that
\[
\inf_{B\,:\,  \mathrm{rank}(B) = r} \|A|_{\tau \times \tau'} -B \|_F \leq \varepsilon
\]
where $A|_{\tau \times \tau'} :=\Pi_{\tau'}A\Pi_{\tau}$ is the orthogonally projected operator in the subdomains spanned by the nodes $\tau,\tau'$.
\end{theorem}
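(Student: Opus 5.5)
The approach is the classical $\mathcal H$-matrix argument: build a degenerate (separable) expansion of the kernel on the product of the two clusters' supports, then push it through the Galerkin discretization.

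Fix an $\eta$-admissible pair $\tau,\tau'$. Let $X_\tau=\bigcup_{i\in\tau}\mathrm{supp}\,\phi_i$ and $Y_{\tau'}=\bigcup_{j\in\tau'}\mathrm{supp}\,\phi_j$ be their supports. Admissibility (\Cref{def:strong_admissibility}) gives
\[
\min\{\mathrm{diam}(X_\tau),\mathrm{diam}(Y_{\tau'})\}\le \eta\,\mathrm{dist}(X_\tau,Y_{\tau'}).
\]
On $X_\tau\times Y_{\tau'}$ the function $\kappa(x,y):=k(x-y)$ is therefore evaluated only away from the diagonal, which is exactly where asymptotic smoothness (\Cref{def:asympt_smoothness}) provides derivative bounds of the form $|\partial_y^\alpha \kappa(x,y)|\le C\,\alpha!\,\gamma^{|\alpha|}\,|x-y|^{-|\alpha|-s}$.

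\textbf{Step 1: separable approximation.} Assume without loss of generality that $\mathrm{diam}(Y_{\tau'})\le\eta\,\mathrm{dist}$; otherwise swap the roles of $x$ and $y$. Expand $\kappa(x,\cdot)$ in $y$ around the center $y_0$ of a ball containing $Y_{\tau'}$. One option is a truncated Taylor expansion of order $p$; the more robust option is tensor-product Chebyshev interpolation of degree $p$ on a bounding box of $Y_{\tau'}$. Either way this gives
\[
\kappa_p(x,y)=\sum_{|\alpha|\le p} a_\alpha(x)\,b_\alpha(y),
\]
with at most $\binom{p+D}{D}=O(p^D)$ terms, or $(p+1)^D$ in the tensor-Chebyshev case. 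Using the derivative bounds, the Taylor remainder is bounded by
\[
\sum_{|\alpha|=p+1}\frac{|\partial^\alpha\kappa|}{\alpha!}\,|y-y_0|^{p+1}\lesssim C\,p^{D-1}\,\big(\gamma\,\mathrm{diam}(Y_{\tau'})/(2\,\mathrm{dist})\big)^{p+1}\,\mathrm{dist}^{-s}\le C'\,q^{p},
\]
where $q=\gamma\eta/2<1$ for $\eta$ small. In the Chebyshev variant one instead uses analyticity of $\kappa(x,\cdot)$ in a Bernstein ellipse, whose size is guaranteed by the same derivative bounds; this yields $q<1$ for any fixed $\eta$. The result is the uniform estimate $\sup_{X_\tau\times Y_{\tau'}}|\kappa-\kappa_p|\le C' q^p$.

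\textbf{Step 2: discretize.} Define
\[
B_{ij}=\sum_\alpha\Big(\int a_\alpha\phi_i\Big)\Big(\int b_\alpha\phi_j\Big),\qquad i\in\tau,\ j\in\tau'.
\]
Then $B=\sum_\alpha u_\alpha v_\alpha^T$ has rank at most $O(p^D)$, and we can pad it to rank exactly $r$ if needed, since the infimum is monotone in $r$. Moreover
\[
|A_{ij}-B_{ij}|\le \|\kappa-\kappa_p\|_\infty\,\|\phi_i\|_{L^1}\|\phi_j\|_{L^1},
\]
so
\[
\|A|_{\tau\times\tau'}-B\|_F\le C' q^p\Big(\sum_{i\in\tau}\|\phi_i\|_{L^1}^2\Big)^{1/2}\Big(\sum_{j\in\tau'}\|\phi_j\|_{L^1}^2\Big)^{1/2}=:C''q^p.
\]
Here $C''$ is bounded by a quantity depending on $\Omega$ and the basis, e.g. at most $|\Omega|\max_i\|\phi_i\|_\infty$-type constants, independent of $\varepsilon$. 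Choosing $p=\lceil\log(C''/\varepsilon)/\log(1/q)\rceil$ then gives error at most $\varepsilon$ with $r=O(p^D)=O(\log(1/\varepsilon)^D)$. Finally, note that $\Pi_{\tau'}A\Pi_\tau$ is just the block, padded by zeros, so its Frobenius distance to the low-rank matrix is unchanged.

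\textbf{Main obstacle.} The delicate step is Step 1: obtaining a geometric rate $q<1$ uniformly over the admissible pair. With Taylor expansion this forces $\eta<2/\gamma$. To cover arbitrary $\eta$ I would use Chebyshev interpolation together with the analytic-extension argument for asymptotically smooth kernels (as in Börm--Grasedyck--Hackbusch). This requires carefully tracking the $p^{D}$ Lebesgue-constant factors, which are harmless because they are absorbed into a slightly larger $q$. A secondary point is that the constant $C''$ must not blow up with $K$; for a normalized basis (a partition of unity, or an $L^1$-bounded basis) this holds, and I would state that assumption explicitly.
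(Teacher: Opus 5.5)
Your proposal is correct and follows the same route as the paper. Both build a separable (Taylor or tensor-interpolation) expansion of $k(x-y)$ on the admissible pair, with error $\sim (c\eta)^{m}$ from $r = O(m^D)$ terms, and push it through the Galerkin discretization to get a rank-$r$ block whose Frobenius error is controlled by the kernel's $L^\infty$ error. The paper simply cites Hackbusch's Lemma~4.29 for the discretization step (via $L^2$ norms and the restriction operators $R_\tau$ rather than your $L^1$ norms of the basis functions) and Theorem~4.22 for the kernel estimate, and it tacitly needs $c_2'\eta<1$, which is the same small-$\eta$ condition you make explicit.
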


The result in \Cref{prop:conv_is_hss} shows that each regular convolutional operator on well-separated domains can be well approximated by a low-rank matrix, where the rank grows logarithmically with the tolerance. In other terms, if the domains are well-separated and the kernel is regular enough, then interactions between subdomains can be well-approximated through a low-rank expansion. Thus, the whole convolution can be approximated in $\HSS{}(r)$ by recursively partitioning the index set into clusters. The proof of \Cref{prop:conv_is_hss} is included in \Cref{proof:conv_is_hss}. 

Next, we analyze approximation and data efficiency properties of the model. Note that by setting the number of hierarchical levels to its minimum and simultaneously increasing the rank to its maximum, the backbone effectively reduces to a standard multilayer perceptron (MLP). This observation immediately implies that \architecturename{} inherits the well-known \textbf{universal approximation} property of MLPs, at the cost of possibly increasing the number of levels of the tree and the rank.

Moreover, in combination with piecewise linear activation functions with learnable slope, it satisfies data-efficiency and exactness recovery properties, as formalized in the following:
\begin{theorem}(Exact Recovery and Data-Efficiency) \label{thm:global_minima}
 Consider the model
\begin{align*}
\mathcal{N}_\theta(b) &= H_{\ell} \circ \dots \circ H_1(b), \\
H_i(z) &= \mathrm{LeakyReLU}_{\alpha_i}(W_i[z]),\\ \theta &= (W_1,\alpha_1,\dots,W_\ell,\alpha_\ell)
\end{align*}
and for $\lambda > 0$, the loss function together with its set of global minimizers
\begin{align*}
\mathcal L_N(\theta) &= \sum_{i=1}^N \| \mathcal{N}_{\theta}(b_i)-u_i \|_2^2 + \frac{\lambda}{2} \sum_{i=1}^\ell (\alpha_i-1)^2, \\
\mathcal M_N &:= \underset{W_i \in \mathrm{HSS}(r_i,\mathcal T), \alpha_i \in \mathbb R}{\arg\min} \mathcal L_N\, , 
\end{align*}

where the datapoints $\{(b_i,u_i) \}_{i=1}^N\subseteq \Real^{d^k} \times \Real^{d^k}$ are standard-Gaussian distributed and satisfying $\mathcal Gu_i = b_i$ with $\mathcal G^{-1} \in  \mathrm{HSS}(r,\mathcal T)$. 
Furthermore, suppose $\max_i r_i \geq r$. 

Then there exists a constant $C > 0$ such that, 
whenever $
N \;\geq\; C \cdot \sum_{i=1}^\ell r_i ,
$
the following exact recovery identity holds with high probability 
\[
\sup_{\theta \in \mathcal M_N} 
\bigl\| \mathcal{N}_{\theta}(\cdot) - \mathcal G^{-1}(\cdot) \bigr\|_{L^{\infty}} 
= 0 .
\]
Consequently, any global minimizer $\theta^* \in \mathcal M_N$ has zero generalization gap, i.e.,
$
\mathcal N_{\theta^*} = \mathcal G^{-1}
$ and the model is data-efficient as exact recovery is possible with a number of data points $N$ that depends only on the architecture's intrinsic dimensionalities $r_i$.
\end{theorem}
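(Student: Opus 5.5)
The plan has four steps: show the infimum of $\mathcal L_N$ is zero, characterize every zero-loss minimizer, show such minimizers are linear maps in $\HSS{}(r,\mathcal T)$ that agree with $\mathcal G^{-1}$ on the data, and then use Gaussianity of the data to conclude they equal $\mathcal G^{-1}$.

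\textbf{Step 1: the infimum is zero.} Since $\max_i r_i \ge r$, pick an index $i_0$ with $r_{i_0}\ge r$. Set $W_{i_0}=\mathcal G^{-1}$, which lies in $\HSS{}(r,\mathcal T)\subseteq \HSS{}(r_{i_0},\mathcal T)$ because the rank-$r$ telescopic decomposition can be padded with zero columns. Set $W_i=I$ for $i\neq i_0$; the identity is block diagonal and so belongs to every $\HSS{}(r_i,\mathcal T)$. Take $\alpha_i=1$ for all $i$. Each activation is then the identity, the network computes $\mathcal G^{-1}$, the data term vanishes, and so does the penalty. Hence $\min \mathcal L_N=0$, and $\mathcal M_N$ is exactly the set of parameters with zero loss.

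\textbf{Step 2: consequences of zero loss.} Zero loss with $\lambda>0$ forces $\alpha_i=1$ for every $i$. Every activation is therefore the identity, and any $\theta\in\mathcal M_N$ realizes a linear map $M_\theta=W_\ell\cdots W_1$. The data term then forces $M_\theta b_i = u_i=\mathcal G^{-1}b_i$ for $i=1,\dots,N$. This is the role of the learnable-slope LeakyReLU together with the penalty: it collapses the nonlinear model onto the linear class.

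\textbf{Step 3: structure of the product.} Next we show that $\Delta:=M_\theta-\mathcal G^{-1}$ lies in a structured class of low intrinsic dimension. Using closure of $\HSS{}$ under products (ranks add under multiplication with the same cluster tree, in the spirit of the inversion-invariance remark of \Cref{def:telesc-decomp}), $M_\theta\in\HSS{}(\sum_i r_i,\mathcal T)$. Therefore $\Delta\in\HSS{}(R,\mathcal T)$ with $R\le \sum_i r_i + r\le 2\sum_i r_i$. Moreover $\Delta B=0$, where $B=[b_1,\dots,b_N]$.

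\textbf{Step 4: uniqueness from Gaussian data (main obstacle).} We need that, with high probability, the only matrix in $\HSS{}(R,\mathcal T)$ annihilating $N\ge C R$ Gaussian vectors is zero. I would argue blockwise, following the leaves of the telescopic decomposition. Restricting to the rows of a leaf $\tau$, the row block $\Delta|_{\tau\times[d]}$ equals $D_\tau$ acting on the leaf coordinates plus $U_\tau$ times a rank-$R$ factor acting on the rest. Each row of this block therefore lies in a subspace of dimension $|\tau|+R$, which has size $O(R)$ when leaf sizes are of order $R$ (the usual HSS convention, which I would impose). For a fixed subspace $S$ of dimension $s$, the projected Gaussian vectors $P_S b_i$ span $S$ almost surely once $N\ge s$. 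However, the subspace is itself parameter dependent: the factor $V$ in the off-diagonal part is unknown. To handle this I would use a standard covering or dimension-counting argument, in the style of low-rank matrix sensing and of the data-efficiency argument in \citet{boulle2023elliptic}. The variety $\HSS{}(R,\mathcal T)$ has dimension $O(Rd)$, while the linear constraints $\Delta B=0$ impose $Nd$ generic conditions. A transversality argument then shows the kernel meets the variety only at $0$ almost surely when $N\ge C R$, possibly with a logarithmic factor absorbed into ``high probability''. This dimension count is the delicate point, because an $\HSS{}$ variety is not a linear space, and I expect it to be the hardest step.

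Once $\Delta=0$ is established, $\mathcal N_\theta=\mathcal G^{-1}$ for every $\theta\in\mathcal M_N$, which gives the $L^\infty$ identity and zero generalization gap.
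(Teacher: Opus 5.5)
Steps 1--3 coincide with the paper's proof: you exhibit a zero-loss parameter, use the penalty to force $\alpha_i=1$, and collapse the network to a product of $\HSS{}$ matrices of rank at most $\sum_i r_i$. You are also right that a leaf-size assumption is needed, since for $L=0$ the class $\HSS{}(r,\mathcal T)$ is all of $\Real^{d\times d}$.

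The genuine gap is Step 4, and the lemma you need there is false. Let $B=[b_1,\dots,b_N]$ with $N<d$. Pick any nonzero $v\perp\mathrm{col}(B)$ and any nonzero $y$, and set $\Delta=yv^{T}$. Every block row $\Delta(\tau,\tau^c)=y_\tau v_{\tau^c}^T$ and every block column $\Delta(\tau^c,\tau)$ has rank at most one. Hence $\Delta\in\HSS{}(1,\mathcal T)\subseteq\HSS{}(R,\mathcal T)$, $\Delta\neq 0$ and $\Delta B=0$. This holds for every draw of the data and every $N$ with $CR\le N<d$.

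Your own blockwise analysis shows why. The union over $V$ of the row subspaces $\Real^{|\tau|}\oplus\mathrm{col}(V)$ is all of $\Real^{d}$, so a single row can be any vector orthogonal to $\mathrm{col}(B)$. The transversality count fails for the same reason. The set $\{\Delta:\Delta B=0\}$ is not a generic subspace of codimension $Nd$: it is determined by $\mathrm{col}(B)$ alone and contains the whole cone $\{yv^T: v\perp\mathrm{col}(B)\}$, which lies inside $\HSS{}(1,\mathcal T)$. Forward products never see the row-side factors of off-diagonal blocks, just as $XY^TB$ does not determine a rank-$r$ matrix $XY^T$ when $N<d$.

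This is not an artifact of passing to the difference class in Step 3; it gives spurious global minimizers under the theorem's own hypotheses.
\begin{itemize}
\item If $\ell\ge 2$, put $W_{i_0}=\mathcal G^{-1}$ and $W_j=I+yv^T$ for some $j\neq i_0$ (this needs only $r_j\ge 1$). Choose $v$ orthogonal to the training inputs of layer $j$: $v\perp b_1,\dots,b_N$ if $j<i_0$, and $v\perp u_1,\dots,u_N$ if $j>i_0$. Set all other $W_i=I$ and all $\alpha_i=1$.
\item If $\ell=1$ and $r_1\ge r+1$, put $W_1=\mathcal G^{-1}+yv^T$ with $v\perp b_1,\dots,b_N$.
\end{itemize}
In both cases $\mathcal L_N=0$ while $\mathcal N_\theta\neq\mathcal G^{-1}$.

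The paper handles this step by citing \citep{levitt2024linear}. That recovery result, however, analyzes sketches of both $A$ and $A^{T}$ (namely $A\Omega$ and $A^{T}\Psi$). The training pairs supply only forward products $\mathcal G^{-1}b_i$, so the citation does not close the gap either. No covering or dimension-counting argument can rescue Step 4 as stated. You would need an additional hypothesis, such as $N\ge d$ (which makes the claim trivial), or data or a parametrization that also constrains the transposed action.
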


A proof of \Cref{thm:global_minima} is included \Cref{proof:global_minima}. This result shows that even in possession of only a possibly very small number of examples, if the underlying operator is \HSS{}, then all global minimizers of $\mathcal L_N$ recover the exact solution operator. This is, for example, the case for linear elliptic PDEs.
Moreover, notice that \Cref{thm:global_minima} formalizes the \virg{data-efficiency} for the proposed architectures, in the spirit of what was done in \citep{boulle2023elliptic}. An experimental validation of this result is 
presented in \Cref{fig:traindata_vs_loss} and \Cref{fig:traindata_vs_loss_3d}, where we numerically compare the data-efficiency of our architecture against that of other baseline models.

\section{Experiments}
In this section, we will present the numerical results. For a more complete description of the PDE problems considered and the data generation, we refer to \Cref{sec:data_specification}; for the hyperparameter settings and model implementation details, we refer to \Cref{sec:details_models}; for more details on the training loss and evaluation metrics used, we refer to \Cref{subsec:metrics}. 
 
\subsection{Data Efficiency}\label{sec:data_eff}

\begin{figure}
    \centering
    \includegraphics[width=\linewidth]{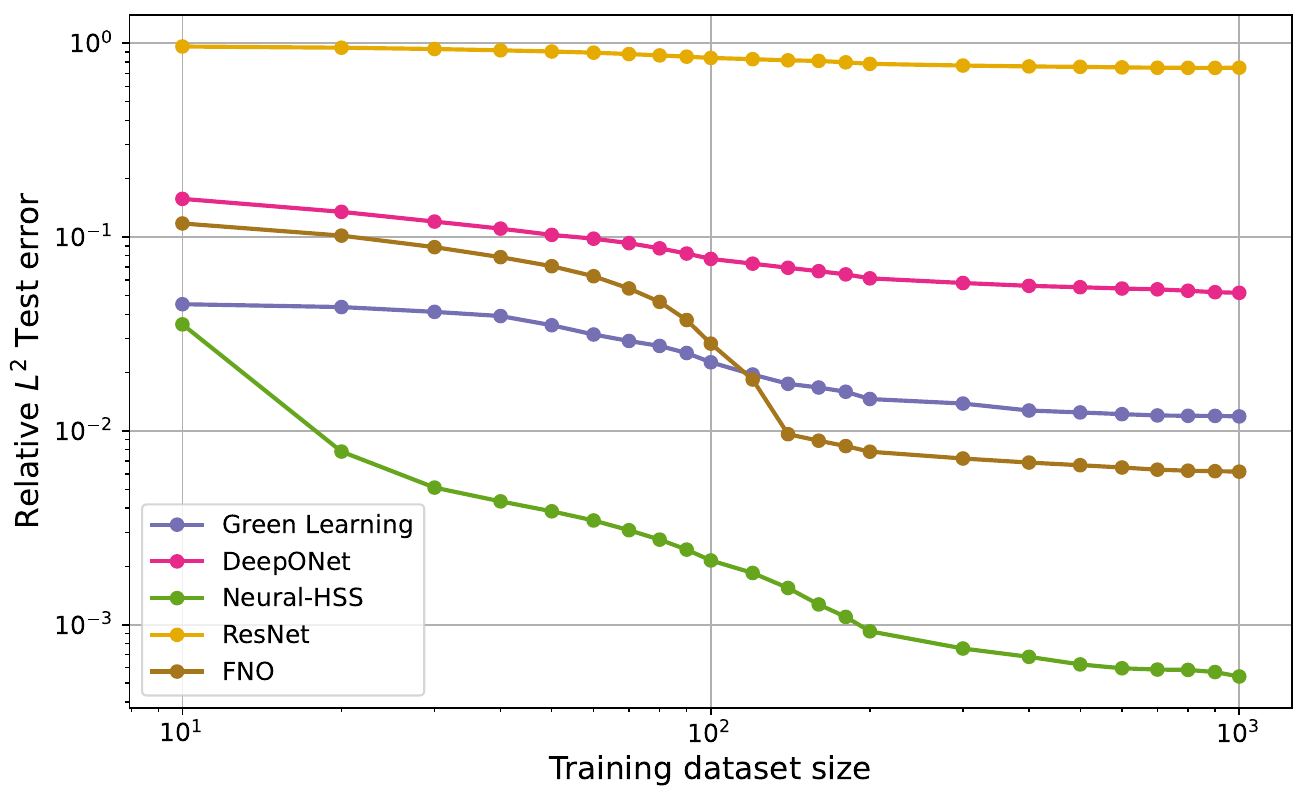}
    \caption{Train size vs relative test error for different models. The models are trained on a 1D Poisson equation.}
    \label{fig:traindata_vs_loss}
\end{figure}

We conduct two experiments analogous to those in \citep{boulle2023elliptic}. 
The first experiment focuses on the one-dimensional Poisson equation, where we train our models on datasets of varying sizes, ranging from $10$ to $10^3$ samples. 
The second experiment considers the three-dimensional equation on a grid with resolution $128 \times 128 \times 128$, corresponding to approximately $2 \times 10^{6}$ grid points. 
Here, we vary the training set size from $16$ to $256$ samples. 
We use fewer training samples in three dimensions because, as shown in \citep{boulle2023elliptic}, meaningful conclusions can already be drawn with training sets of around $200$ samples. 
Moreover, generating three-dimensional data and training models on it is computationally expensive, further motivating the need for data-efficient models in 3D simulations. 

\black{In the 3D experiment in \Cref{fig:traindata_vs_loss_3d}, we do not report the performance of the Green Learning model \citep{boulle2023elliptic}, since the memory of an NVIDIA A100 $80 \mathrm{GB}$ was not sufficient already using a batch size of $1$ and a depth of $1$. We also omit DeepONet results for training set sizes smaller than $128$, as its performance in this regime was poor and including it would clutter the visualization. The full plot can be found in \cref{sec:pois_3d_data_eff}.
Details on the models are provided in \cref{sec:details_models}.}

\paragraph{1D} Our findings in \cref{fig:traindata_vs_loss} are consistent with those reported in \citep{boulle2023elliptic}. With a limited training sample budget, Green Learning outperforms the FNO model, as observed in \citep{boulle2023elliptic}; however, as the number of samples increases, FNO scales more effectively and surpasses Green Learning once the training size exceeds $10^2$. In contrast, \architecturename{} consistently outperforms all baselines across all training budgets. Notably, with very small datasets (around $\sim 10$ samples), the performance gap between \architecturename{} and the Green Learning model remains small. As the training size increases, this gap grows substantially in favor of \architecturename{}, highlighting its superior scalability in this setting.
 
\paragraph{3D} 
The 3D setting is particularly challenging for neural PDE solvers, as generating large, high-fidelity 3D datasets is costly, not only in terms of simulation but also in data storage and model training. This makes it of paramount importance to have a model that can be trained with a very reduced number of samples when the underlying solution operator is structured.

 As shown in the results in \Cref{fig:traindata_vs_loss_3d}, \architecturename{} consistently outperforms all baselines. With only $16$ samples, \architecturename{} matches the performance of FNO trained on $64$ samples and ResNet trained on $128$ samples. At $32$ samples, \architecturename{} achieves performance comparable to FNO trained on $256$ samples. Training time is a crucial factor \Cref{fig:traindata_vs_loss_3d}. \architecturename{} trains significantly faster than both FNO and ResNet, completing training in about two and a half hours, compared to six hours for FNO and nearly one day and eighteen hours for ResNet. DeepONet trains even faster, requiring only about one hour, but it needs a much larger training set to achieve comparable performance. With a training size of only $256$ samples, DeepONet cannot match  \architecturename{}'s accuracy. Since generating new samples is highly expensive, \architecturename{} is overall the most efficient choice, also in terms of time.

Together with the results in \Cref{tab:twod}, this shows that the choice of higher-order \HSS{} layer for $m-$dimensional problems proposed in \Cref{eq:ndimHSS_layer} is both simple and effective, even with very small outer rank values $r_{out}$, which for this experiment was set to two (see \Cref{sec:details_models}).

\begin{figure*}[t]
\centering
\resizebox{\textwidth}{!}{
\bgroup
\setlength{\tabcolsep}{0.0mm}
\begin{tabular}{cc}
   \includegraphics[width=0.5\linewidth]{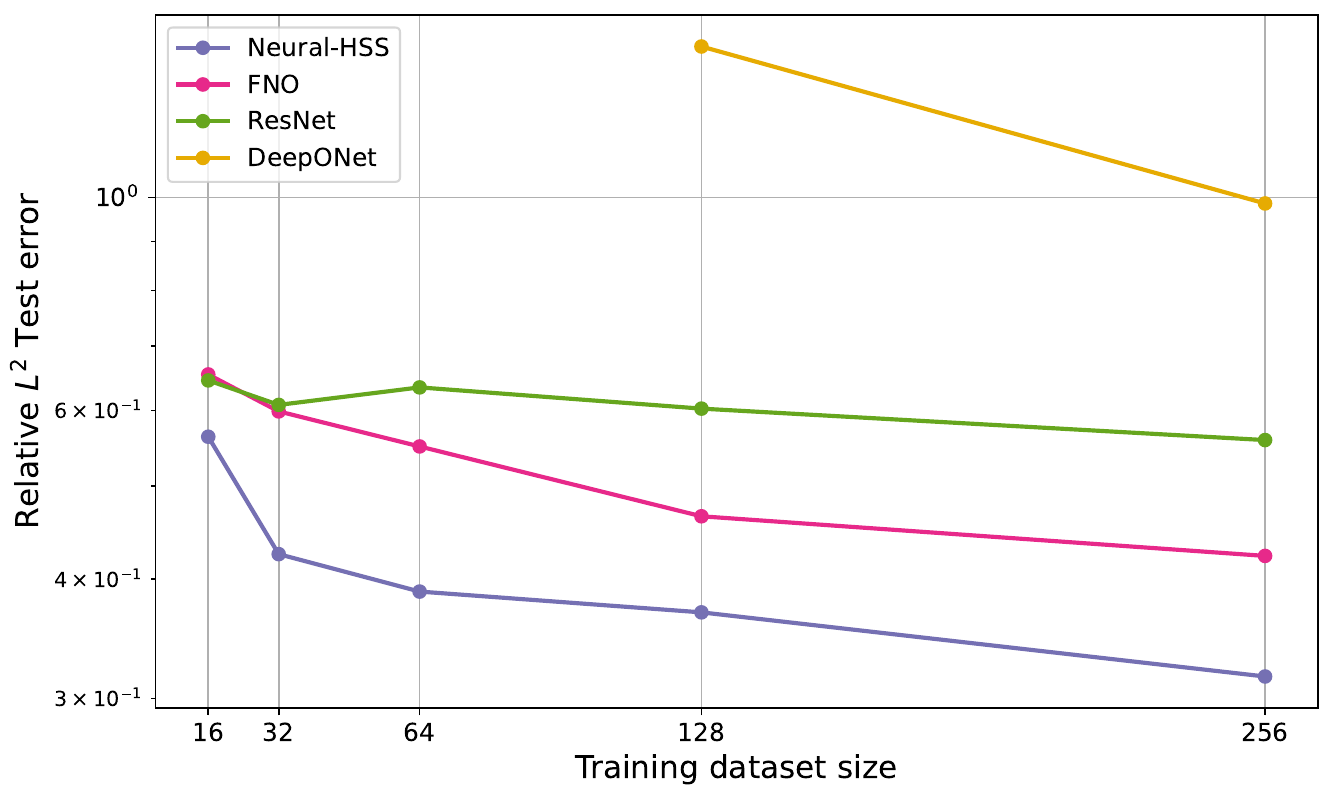}
     &
    \includegraphics[width=0.5\linewidth]{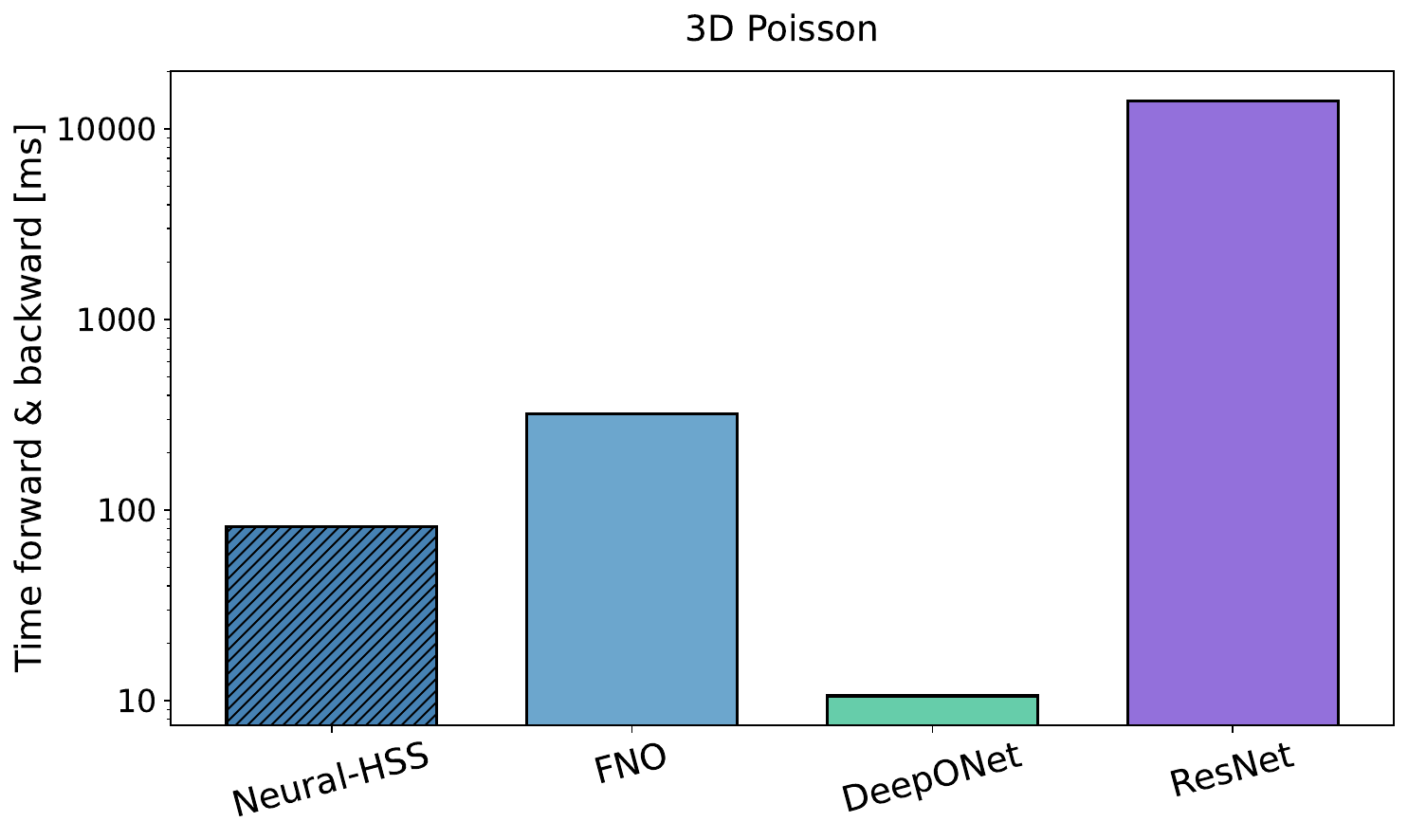} \\
\end{tabular}
\egroup
}
\caption{\textbf{Left}: Train size vs relative test error for different models. The models are trained on a 3D Poisson equation. \textbf{Right}: Timing of forward and backward for the different models.}
\label{fig:traindata_vs_loss_3d}
\end{figure*}

\subsection{Additional Evaluation of PDE Learning Performance}
\paragraph{1D PDEs.}
To showcase the effectiveness of \architecturename{} on time-dependent problems, we train the models to predict the dynamics of the Heat and Burgers’ equations. Specifically, the models learn the time-stepping operator $\mathcal{G}: u_t \mapsto u_{t+\delta t}$. 
At this timescale, prior work has shown that learning the residual yields better performance than direct prediction \citep{li2022learning}. Following this strategy, our model $M$ is trained to predict $\delta u = u_{t+\delta t} - u_t,$ so that, during inference, the state is updated as $u_{t+\delta t} = u_t + M(u_t).$
For training stability, we normalize the residuals by the maximum value in the training set, i.e., $\max \delta u$. We defer the model details with all the hyperparameters to \cref{sec:details_models}. We want to remark that we do not provide results for Green Learning on the Burgers equation, as the method requires the underlying PDE to be linear.
 
In both experiments, \architecturename{}  outperforms the baselines while using fewer parameters, see \cref{tab:oned}. For the Heat equation, we observe trends consistent with the data efficiency experiments (\cref{sec:data_eff}). When trained on a larger dataset (approximately $10^4$ samples), Green Learning underperforms compared to both \architecturename{} and FNO. DeepONet is the most parameter-hungry model, we fix it at $247$K parameters, since smaller variants consistently yielded weaker performance. We notice, moreover, that by the end of training, the parameter $\alpha$ of the $\mathrm{LeakyReLU}$ converged to $1$, reflecting that the model has learned the underlying relation to be linear.

For the Burgers' equation, we find that employing a full-rank lifting and projection improves performance. This observation is aligned with findings in the literature on training models with low-rank parameter matrices, where the last and sometimes first layers are typically kept full-rank \citep{schotthoefer2022lowrank,zangrando2024geometry,wang2021pufferfish}. Moreover, with respect to the heat equation, it is necessary to employ a higher rank in the intermediate layers to achieve sufficient expressivity. In this case, \architecturename{} consistently outperforms all other baselines while using fewer parameters and, similarly to the heat equation, the FNO emerges as the second-best performing. 

We run another series of experiments on a modified version of the Burgers' equation, where, using a parameter $\beta$, we shift the equation towards the elliptic setting.  While the other models seem to be unaffected by the change in the PDE setting, \architecturename{} shows a significant performance improvement as the equation approaches the elliptic setting, without any change to the hyperparameter setting. We defer to \cref{sec:beta_burger} the detailed results.

As shown in \Cref{fig:1d_eq_flops}, \architecturename{}'s time for a single optimization step is comparable with other baseline models, while being significantly more effective as shown in \Cref{tab:oned}. The efficiency is comparable to that of FNO, for which most of the computations are performed in Fourier space while truncating the modes, which significantly reduces cost. DeepONet is the fastest model; however, it also has the highest test error. Timings of only the inference phase are reported in \Cref{sec:forward_timing}.

\textcolor{black}{To further validate \architecturename{}, we tested all the methods on the Poisson equation with Neumann-type boundary conditions. Coherently with the theory, as it is known that the solution operator still has \HSS\, structure even with these boundary conditions \citep{hackbusch2015matrices}, we observe that \architecturename{} achieves the lowest test error and parameter count. In this setting, ResNet underperforms with respect to Dirichlet boundary conditions, and the test error of all the other baselines are comparable.}

\begin{table*}[t]
\centering
\resizebox{\textwidth}{!}{%
\begin{tabular}{lccccc cccccc cccccc cccccc}
\toprule
& \multicolumn{2}{c}{\architecturename{} } 
& \multicolumn{2}{c}{FNO} 
& \multicolumn{2}{c}{ResNet} 
& \multicolumn{2}{c}{DeepONet} 
& \multicolumn{2}{c}{Green Learning} \\ 
\cmidrule(lr){2-3} \cmidrule(lr){4-5} \cmidrule(lr){6-7} \cmidrule(lr){8-9} \cmidrule(lr){10-11}
Equation & Params & Test Err & Params & Test Err & Params & Test Err & Params & Test Err & Params & Test Err \\ 
\midrule
Heat Eq.    & $45$K  & $3\times 10^{-3}$ & $150$K & $8 \times 10^{-3}$ & $165$K & $1 \times 10^{-2}$ & $247$K & $1 \times 10^{-2}$ & $83$K  & $1 \times 10^{-2}$ \\ 
\textcolor{black}{Poisson Neumann BC}    & $37$K  & $2\times 10^{-4}$ & $102$K & $3 \times 10^{-3}$ & $165$K & $4 \times 10^{-1}$ & $247$K & $9 \times 10^{-3}$ & $83$K  & $3 \times 10^{-3}$ \\
Burgers' Eq.& $1.5$M & $0.12$              & $1.5$M & $0.26$             & $1.7$M & $0.57$             & $1.7$M & $0.44$             & -      & - \\ 
\bottomrule
\end{tabular}}
\caption{Comparison between \architecturename{}  and baseline models. 
We report the number of learnable parameters for each model and the test error \cref{eq:traj_error}.}
\label{tab:oned}
\end{table*}

\paragraph{2D PDEs.}
In \Cref{tab:twod} and in \Cref{fig:2d_eq_flops}, we test model performance on 2D problems, when predicting the steady state of an equation or a specific time step. For the incompressible Navier-Stokes, we use a Z-score normalization as in the original papers \citep{yao2025guided,huang2024diffusionpde}; for the other equations, we use a max rescaling as for the $1$D experiments. We defer the model details with all the hyperparameters to \cref{sec:details_models}. We do not provide results for Green Learning on the non-elliptic equation, as the method requires the underlying PDE to be linear. \textcolor{black}{In addition to all the previous architectures, for the 2D case, we can also compare with the FactFormer architecture proposed in \citep{li2023scalable}.}

For the Poisson equation, similar to the 1D case, smaller DeepONet architectures underperform. We clearly observe that the \architecturename{} significantly outperforms all baselines, with an even larger performance gap in the 2D experiments compared to the 1D case. As before, the FNO model ranks second, followed by the Green Learning model, consistent with the findings from the 1D experiments.

\begin{table*}[t]
\centering
\resizebox{\textwidth}{!}{%
\begin{tabular}{lccccc cccccc cccccc cccccc cccccc}
\toprule
& \multicolumn{2}{c}{\architecturename{} } 
& \multicolumn{2}{c}{FNO} 
& \multicolumn{2}{c}{ResNet} 
& \multicolumn{2}{c}{DeepONet} 
& \multicolumn{2}{c}{Green Learning}  
& \multicolumn{2}{c}{\textcolor{black}{FactFormer}} \\ 
\cmidrule(lr){2-3} \cmidrule(lr){4-5} \cmidrule(lr){6-7} \cmidrule(lr){8-9} \cmidrule(lr){10-11} \cmidrule(lr){12-13}
Equation & Params & Test Err & Params & Test Err & Params & Test Err & Params & Test Err & Params & Test Err & Params & Test Err \\ 
\midrule
Poisson Eq.  & 37K  & $7 \times 10^{-8}$ & $132$K & $7 \times 10^{-6}$ & $165$K & $3 \times 10^{-4}$ & $280$K & $2 \times 10^{-2}$ & $83$K  & $5 \times 10^{-5}$&$3.9$M& $6 \times 10^{-3}$ \\ 
\textcolor{black}{Poisson Eq. (Mixed BC)}& $37$K  & $1 \times 10^{-4}$ & $132$K & $6 \times 10^{-3}$  & $165$K & $8 \times 10^{-2}$ & $280$K & $4 \times 10^{-1}$  & $83$K  & $1 \times 10^{-2}$ &3.9 M& $7 \times 10^{-2}$   \\ 
\textcolor{black}{Poisson Eq. (L-shape domain)}& $37$K  & $2 \times 10^{-2}$ & $132$K & $2 \times 10^{-1}$ & $165$K & $7 \times 10^{-1}$ & $280$K & $8 \times 10^{-1}$ & $83$K  & $5 \times 10^{-1}$&3.9 M& $2 \times 10^{-1}$  \\ 
\textcolor{black}{Helmholtz equation}& $37$K  & $6 \times 10^{-3}$ & $132$K & $5 \times 10^{-2}$ & $165$K & $1 \times 10^{0}$ & $280$K & $1 \times 10^{0}$ & $83$K  & $1 \times 10^{0}$&3.9 M& $7 \times 10^{-2}$ \\ 
Gray--Scott Eq. (Forward) & 329K & 0.294              & 2.1M & 0.331             & 

 1.99M & 0.273   & 
2.3M & 0.315             & -      & - &3.9M& 0.291 \\ 
Gray--Scott Eq. (Inverse) & 329K  & 0.203              & 2.1M & 0.208             & 
1.99M & 0.193             & 
2.3M & 0.276             & -      & - &3.9M& 0.192 \\ 
NS Eq. (Forward) & 329K & 0.123             & 2.1M & 0.483             & 1.99M & 0.481             & 2.3M & 0.409             & -      & -&3.9M& 0.14 \\ 
NS Eq. (Inverse) & 329K & 0.208              & 2.1M & 0.19             & 1.99M & 0.383             & 2.3M & 0.514             & -      & -&3.9M& 0.21 \\ 
\bottomrule
\end{tabular}}
\caption{Comparison between \architecturename{}  and baseline models on 2D problems. 
We report the number of learnable parameters for each model and the test error \Cref{eq:relative_l2}. With NS Eq., we refer to the incompressible Navier-Stokes equation.}
\label{tab:twod}
\end{table*}

In the non-elliptic setting, \architecturename{} demonstrates competitive performance compared to other models. This is particularly evident for the incompressible Navier–Stokes equation, where \architecturename{} outperforms all baselines in the forward problem. For the Gray–Scott forward model, \architecturename{} ranks as the second-best model, with ResNet achieving a slightly lower test error,  but using significantly fewer parameters and fewer training steps. \textcolor{black}{A very similar situation is observed in the Gray-Scott inverse problem, where FactFormer and ResNets' test errors are essentially equal and slightly lower than \architecturename{}.}
As for the $1$D experiments, we use full-rank layers for the lift and the projection.

Also for these experiments, we report in \Cref{fig:2d_eq_flops} the time required for a training step. In this setting,  \architecturename{} remains highly competitive. In particular, for the Poisson equation, it is the second most computationally efficient model after DeepONets, but with a gap of six orders of magnitude in performance. For the Gray--Scott model and incompressible Navier--Stokes equation, training step timing is comparable to that of the FNO layer, which benefits from a low-cost forward pass due to mode truncation. The inverse Gray--Scott experiment highlights an interesting trade-off: although ResNet achieves the lowest test error, it requires more than one order of magnitude more in terms of time compared to \architecturename{}, which attains the second-lowest error. Given that the test-error gap between the two models is very small,  \architecturename{} may be preferable in practice due to its substantially lower memory and computational cost.

\textcolor{black}{Moreover, we remark here that the theory of HSS operators does not strictly depend on the domain being rectangular or on the specific Dirichlet boundary conditions. For non-rectangular domains, the Green’s function remains smooth for well-separated points, so the same low-rank approximation property applies \citep{hackbusch2015matrices}.
For this purpose, we experimentally demonstrate the performance of \architecturename{} on an L-shaped domain, for which we include the results in \Cref{tab:twod}.\\
We also tested \architecturename{} on the 2D Poisson equation on the unit square domain deprived of a small disk of radius $R =0.2$ in the center, with mixed boundary conditions. More precisely, on the external boundary of the square, we imposed periodic boundary conditions, while on the interior border of the disk, we imposed Neumann boundary conditions. While the exact structure of the solution operator in this more complicated geometry is not guaranteed theoretically to be well captured by the HSS structure, we show numerically that the inductive bias induced by \architecturename{} still works as a good approximation, as shown in the results in \Cref{tab:twod}.
}

Finally, compared to the one-dimensional experiments, we observe that the efficiency of \architecturename{} compared to the other baselines increases with the dimension as discussed in \Cref{sec:model_overview}. We also refer to \Cref{sec:data_eff} for the three-dimensional case, in which this effect is even more evident.

\section*{Conclusion}
In this work, we present \architecturename{}, a novel hierarchical architecture inspired by the structure of the solution operator of Elliptic-type linear PDEs. Leveraging this very structured representation, we are able to produce lightweight neural PDE solvers with competitive performance with respect to state-of-the-art baselines and provable data-efficiency guarantees. The proposed architecture has an exact \HSS{} structure for the one-dimensional case, while for higher-dimensional problems uses an approximate outer product expansion. Our model demonstrates superior scalability with respect to problem dimensionality compared to baseline approaches for data-efficient learning in the elliptic setting. Notably, the Green Learning model could not be trained in $3$D at high resolution. Furthermore, in lower dimensions ($1$ and $2$), our model also exhibits better scalability with respect to training set size, again outperforming all the baselines on linear and nonlinear PDEs.

\newpage
\section{Impact Statement}
This paper presents work whose goal is to advance the field
of Machine Learning. There are many potential societal
consequences of our work, none which we feel must be
specifically highlighted here
\bibliography{example_paper}
\bibliographystyle{icml2026}

\newpage
\appendix
\onecolumn

\section{Notation and useful definition}

\begin{definition}(Modal product)\label{def:modal_product}

Let $\mathcal Z \in \Real^{d_1 \times \dots \times d_m}, W \in \Real^{D_k \times d_k}$. We denote with $(Z \bigtimes_k W) \in \Real^{d_1 \times \dots d_{k-1} \times D_{K} \times d_{k+1} \times \dots \times d_m}$ the $k-$th modal product of $Z$ and $W$ as
\[
(\mathcal Z \bigtimes_k W)_{i_1,\dots,i_n}:=\sum_{j_k = 1}^{d_k}\mathcal Z_{i_1,\dots,i_{k-1},j_k,i_{k+1},\dots,i_m} W_{i_k,j_k}
\]
\end{definition}

\begin{definition}[Cluster tree]\label{def:Perfect_balance_bt}
Let $d \in \mathbb{N}$.  
A \textbf{cluster tree} is a perfect binary tree $\mathcal{T}$ that defines subsets of indices obtained by recursively subdividing
$
I = [1,\dots,d].
$

The root $\gamma$ of the tree corresponds to the full index set $I$. Each non-leaf node $\tau$ is associated with a consecutive set of indices $I_\tau$, and its two children, $\alpha$ and $\beta$, correspond to two consecutive subsets $I_\alpha$ and $I_\beta$ such that
$I_\tau = I_\alpha \cup I_\beta$ and $I_\alpha \cap I_\beta = \emptyset.
$

If $\mathrm{depth}(\mathcal{T}) = L $ and $d = 2^L k$ for $k\in \mathbb N$, we say that $\mathcal{T}$ is a \textbf{balanced cluster tree} if every non leaf node $\tau = [a,\dots,b]$ is split exactly in the middle, i.e.
\[
I_\alpha = [a, \tfrac{a+b}{2}], 
\qquad I_\beta = [\tfrac{a+b}{2}+1, b].
\]
\end{definition}

\begin{definition}($\eta$- strong admissibility condition \citep[Definition 4.9]{hackbusch2015matrices})\label{def:strong_admissibility}

Let $(\Omega,\|\cdot \|)$ be a normed space.
    We say that two subdomains $X,Y \subseteq \Omega$ satisfy the $\eta$-strong admissibility condition if
    \[
    \max(\mathrm{diam(X),\mathrm{diam}(Y)}) \leq \eta\, \mathrm{dist}(X,Y)
    \]
    where 
    \[
    \mathrm{diam}(X) := \sup_{x,x' \in X} \|x-x'\|,\quad \mathrm{dist}(X,Y):= \max\{\sup_{x \in X} \inf_{y \in Y} \|x-y \|, \sup_{y \in Y} \inf_{x \in X} \|x-y \|\}
    \]
\end{definition}

\begin{definition}(Asymptotic-smoothness \citep[Definition 4.14]{hackbusch2015matrices})\label{def:asympt_smoothness}

We say that $k:\Omega \rightarrow\Real$ is asymptoticall smoth if there exist constants $C_0, C_1>0$ and $\mu \ge 0$ such that for all multi-indices $\alpha$
\[
|\partial^\alpha k(z)| \le C_0 \, C_1^{|\alpha|} |\alpha|! \, |z|^{-\mu-|\alpha|}, \quad z \neq 0,
\].
    
\end{definition}

\section{Proof of \Cref{prop:conv_is_hss}}\label{proof:conv_is_hss}
Thanks to \citep[Lemma 4.29]{hackbusch2015matrices}, we have the bound
\begin{align*}
\inf_{B :\mathrm{rank(B)} = r} \|A-B \|_F &\leq \|k-k^{(r)} \|_{L^2(\Omega_\tau - \Omega_{\tau'})} \|R_\tau \|_2 \|R_{\tau'} \|_2 := C \|k-k^{(r)} \|_{L^2(\Omega_\tau - \Omega_{\tau'})} \leq \\ &\leq  C |\Omega_{\tau}-\Omega_{\tau'}| \|k-k^{(r)} \|_{L^\infty(\Omega_\tau - \Omega_{\tau'})}
\end{align*}
In order for the upper bound to be controlled by $\varepsilon$, we need $ \|k-k^{(r)} \| \leq \frac{\varepsilon}{C}$.
Thanks to \citep[Theorem 4.22]{hackbusch2015matrices} we have that for $m = r^{1/D}$
\[
\|k-k^{(r)} \|_{L^\infty(\Omega_\tau - \Omega_{\tau'})} \leq c_1 \Biggl( \frac{c_2' \mathrm{diam_\infty(\Omega_\tau)}}{\mathrm{dist}(\Omega_\tau,\Omega_{\tau'})} \Biggr)^m \leq c_1(c_2' \eta)^m
\]
Therefore we finally have
\[
\inf_{B :\mathrm{rank(B)} = r} \|A-B \|_F  \leq  C c_1(c_2' \eta)^{r^{1/D}} \leq \varepsilon
\]
Therefore, for $r = O(\log(1/\varepsilon)^D)$ the inequality is satisfied.
\section{Proof of \Cref{thm:global_minima}}\label{proof:global_minima}

\begin{proof}
    First, we note that if we assume, without loss of generality, that 
$\max_i r_i = r_1 \geq r$, and we choose $H_1 = \mathcal G^{-1}$, 
set $H_j$ equal to the identity matrix for all $j \geq 2$, and take 
$\alpha_i = 1$ for every $i$, then we obtain 
 \[
    \sup_{x \in \mathbb R^d} \|\mathcal{N}_\theta(x)-\mathcal G
    ^{-1}x\|_2 = 0.
    \]

Conversely, in order to satisfy $\mathcal L_N(\theta) = 0$, it is necessary 
that $\alpha_i = 1$ for each $i$. Under this condition, 
$\mathcal N_\theta$ can be expressed as a product of HSS matrices 
associated with the same cluster tree $\mathcal T$, which implies that 
$\mathcal N_\theta$ is itself an HSS matrix associated with $\mathcal T$, with HSS rank 
$\sum_{i=1}^{\ell} r_i$. By the results in \citep{levitt2024linear}, such an HSS 
matrix can be recovered uniquely with high probability from 
$\mathcal O\!\left(\sum_{i=1}^{\ell} r_i\right)$ observations. Therefore, all global minimizers must satisfy $W_\ell \dots W_1 = \mathcal G^{-1},\, \alpha_i = 1,$ and therefore the claim.

\end{proof}

\section{Forward pass}\label{sec:forward_pass}

We assume, for simplicity, that the input and output vectors are of equal length.

\begin{algorithm}
\caption{Forward pass of the \architecturename{} linear layer}
\label{alg:hss_linear}
\begin{algorithmic}[1]
\REQUIRE $x \in \mathbb{R}^{n}$, cluster tree $\mathcal T$ of depth $L$, rank $r$
\ENSURE $y \in \mathbb{R}^{n}$

\IF{$\mathrm{depth}(\mathcal T) > 0$}
    \STATE Split $x = [x_1, \dots, x_{2^L}]$, with $x_i \in \mathbb{R}^{|\tau_i|}$
    
    \FOR{$i = 1, \dots, 2^L$}
        \STATE $z_i \gets W_i^{(V)} x_i$
    \ENDFOR
    
    \STATE $z \gets [z_1, \dots, z_{2^L}]$
    \STATE $y \gets \mathrm{HSSForward}(z, \mathcal T_{2r}^{(L-1)}, r)$
    
    \STATE Split $y = [y_1, \dots, y_{2^L}]$, with $y_i \in \mathbb{R}^{r}$
    
    \FOR{$i = 1, \dots, 2^L$}
        \STATE $y_i \gets W_i^{(U)} y_i$
        \STATE $y_i \gets y_i + W_i^{(D)} x_i$
    \ENDFOR
    
    \STATE \textbf{return} $[y_1, \dots, y_{2^L}]$
\ELSE
    \STATE \textbf{return} $W x$
\ENDIF

\end{algorithmic}
\end{algorithm}

\paragraph{Complexity analysis}
A key advantage of \architecturename{} lies in its efficient memory footprint and inference cost. Under the simplifying assumption that the number of leaf indices equals $2r$, the storage requirement of an \HSS{} operator scales as $O(nr)$, and the cost of a matrix–vector multiplication is likewise $
O(nr)$, as shown in \citep{ChandrasekaranULV}. 

\section{Lifting for system of PDEs}

Several PDEs have input and output multiple channels: to adapt our architecture for these kinds of problems, one needs to specify a lifting and projection architectures as depicted in \Cref{fig:model_diagram}.
While multiple choices are possible, we employed the simplest possible choice as a backbone and, as a final layer, a (low-rank) linear tensor map 
\begin{align*}
&\phi_{\mathcal W}: \Real^{D_1 \times \dots \times D_{M}} \to \Real^{d_1 \times \dots \times d_{m}},\quad \phi_{\mathcal W}(\mathcal Z)_{\underline \alpha} = \sum_{\underline \beta} \mathcal W_{\underline \alpha,\underline \beta} \mathcal Z_{\underline \beta},\\ &\mathcal W = \sum_{i=1}^r c_i u_i^{(1)} \otimes \dots \otimes u_i^{(m)} \otimes v_i^{(1)} \otimes \dots \otimes v_i^{(M)},\quad c_i \in \Real,u_i^{(j)} \in \Real^{d_j},v_i^{(j)} \in \Real^{D_j},
\end{align*}
where $c_i,u_i^{(j)},v_i^{(j)}$ are learnable parameters. This allows us to easily extend \architecturename{} for a system of vector PDEs. 

The overall model architecture, including a potential lifting layer for a system of PDEs, is depicted in \Cref{fig:model_diagram}.

\section{Data Specification} \label{sec:data_specification}

For complete transparency and full reproducibility of our results in this section, we provide all the details about the data generation. If we did not generate them, we also include the source and the procedure for their generation.

\subsection{Datasets }\label{subsec:datasets_description}

\paragraph{Poisson equation.} The $n$-dimensional Poisson equation
\[
- \nabla \cdot \big(a(x) \nabla u\big) = f,
\]
models diffusion or conduction in heterogeneous media, where $a(x) > 0$ is a spatially varying diffusivity and $f$ an external source. Although linear, the discretized problem is usually ill-conditioned, making it difficult to solve numerically. The Poisson equation is used to model different physical phenomena such as heat conduction, porous flow, and electrostatics in nonhomogeneous materials. During our experiments, we kept $a=1$ and we trained the models to represent the mapping $\mathcal{G} \colon f \mapsto u$. To assess the quality of the model, we used the relative $L^2$ error on the test set.

\paragraph{Heat equation.} The heat equation is a fundamental linear partial differential equation that describes the diffusion of heat over time. In one spatial dimension, it is written as
\[
u_t = \kappa u_{xx},
\]
where $\kappa$ denotes the thermal diffusivity. The equation smooths out spatial inhomogeneities by dissipating gradients, leading to a monotone decay of energy in the system. In our experiments, we fixed the diffusion coefficient at $\kappa = 0.0002$. The objective was to learn the temporal dynamics of the system by approximating the mapping from the current state to its future evolution, i.e., $\mathcal{G} \colon u_{t} \mapsto u_{t+\delta t}$. In our experiments, we fixed $\delta t = 0.8$. Model performance is evaluated by rolling it through time and calculating the $L^2$ trajectory error on the test set.

\paragraph{Helmholtz equation}\textcolor{black}{Efficient simulation of acoustic, electromagnetic, and elastic wave phenomena is essential in many engineering applications, including ultrasound tomography, wireless communication, and geophysical seismic imaging. When the problem is linear, the wave field \( u \) typically satisfies the Helmholtz equation
\[
- \Delta u - \kappa^2 u = f,
\]
where the wavenumber \( \kappa = \frac{\omega}{c} \) denotes the ratio between the (constant) angular frequency \( \omega \) and the propagation speed \( c \). Numerical solution of the Helmholtz equation is particularly challenging due to its non-symmetric, indefinite operator and the highly oscillatory nature of its solutions.
}

\paragraph{Burgers equation.} The Burgers equation is a fundamental nonlinear partial differential equation that captures the competing effects of nonlinear convection and viscous diffusion. In one spatial dimension, it takes the form
\[
u_t + u u_x = \nu u_{xx},
\]
where $\nu$ denotes the kinematic viscosity. The viscous term $\nu u_{xx}$ regularises these shocks, but introduces thin internal layers that are numerically stiff. In our experiments, we fixed the viscousity coefficient at $\nu = 0.001$, and the objective was to learn the temporal dynamics of the system, that is, to approximate the mapping from the current state to its future evolution, more formally learn the mapping $\mathcal{G} \colon u_{t} \mapsto u_{t+\delta t}$, to evaluate the models we use the $L^2$ trajectory error on the test set.

\paragraph{Incompressible Navier-Stokes equation.}
We consider the incompressible Navier--Stokes equations expressed in terms of the vorticity $w = \nabla \times v$:
\begin{align*}
        \partial_\tau w(c,\tau) + v(c,\tau) \cdot \nabla w(c,\tau) &= \nu \Delta w(c,\tau) + q(c), \\
        \nabla \cdot v(c,\tau) &= 0.
\end{align*}

Here $v(c,\tau)$ is the velocity field, $q(c)$ a forcing term, and $\nu$ the viscosity; in the dataset, the viscosity is fixed $\nu = 10^{-3}$. These equations describe nonlinear, incompressible flow and are widely used to study vorticity dynamics, turbulence, and coherent structures. We use the data from \citep{yao2025guided,huang2024diffusionpde} and we conduct a similar series of experiments, we learned the mapping $\mathcal{G} \colon w_0 \mapsto w_{10}$ and the related inverse problem $\mathcal{G} \colon w_{10} \mapsto w_{0}$. The model performances are evaluated using the relative $L^2$ error on the test set.  

\paragraph{Gray--Scott model} The Gray--Scott equations describe a prototypical reaction--diffusion system involving two interacting chemical species, $A$ and $B$, whose concentrations vary in space and time. They are given by
\begin{equation*}
\left\{
\begin{aligned}
\frac{\partial A}{\partial t} &= \delta_A \Delta A - A B^2 + f(1 - A), \\
\frac{\partial B}{\partial t} &= \delta_B \Delta B + A B^2 - (f + k) B.
\end{aligned}
\right.
\end{equation*}
Here, the parameters $f$ and $k$ regulate the \emph{feed} and \emph{kill} rates, respectively: $f$ controls the rate at which $A$ is supplied to the system, while $k$ controls the rate at which $B$ is removed. The diffusion constants $\delta_A$ and $\delta_B$ determine the spread of the two species in space. We used the dataset provided by \citep{ohana2024the}, where each trajectory consists of $1000$ steps. Rolling out a machine learning model over so many steps becomes impractical due to the accumulation of errors. Therefore, as suggested in \citep{ohana2024the}, a relevant task is to predict the final state in order to understand the long-term behavior of the two chemical species. To this end, we model the mapping $\mathcal{G} \colon (A_{0}, B_{0}) \mapsto (A_{1000},B_{1000})$, and we also pass the model the input parameters $f$ and $k$. Similarly to the incompressible Navier-Stokes, we also made an experiment for the inverse problem. The model performance is evaluated using the relative $L^2$ error on the test set.

\subsection{Poisson Equation ($1$D, $2$D \& $3$D)}
\subsubsection{$1$D (Data Efficiency)}
We generate datasets of solutions to the one-dimensional Poisson equation with homogeneous Dirichlet boundary conditions. 
The spatial domain is discretized uniformly with 1024 grid points over $x \in [0, 1]$, 
with grid spacing $h = 1 / 1024$. For each sample, the right-hand side $f(x)$ is drawn from a truncated Fourier sine series with 20 random modes:
\[
f(x) = \sum_{k=1}^{10} c_k \, \sin \bigl(2 k \pi x\bigr), 
\qquad c_k \sim \mathcal{U}(0, 1),
\]
with $f(x)$ set to zero at the first two and last two grid points to enforce the boundary conditions. The Poisson problem is discretized using a fourth-order central finite difference scheme for the second derivative. 
The resulting banded linear system (with five diagonals) is solved efficiently using a direct banded solver. 
We generate 1,000 samples for training and 1,000 for testing. For training and testing our model, we downsample to $256$.

\subsubsection{$2$D}

We generate datasets of solutions to the two-dimensional Poisson equation with homogeneous Dirichlet boundary conditions. 
The spatial domain is discretized uniformly with $64 \times 64$ grid points over $(x, y) \in [0, 1]^2$, 
with grid spacing $h = 1 / 64$. For each sample, the right-hand side $f(x, y)$ is drawn from a truncated Fourier sine series with up to 10 random modes in each direction:
\[
f(x, y) = \sum_{k_x=1}^{10} \sum_{k_y=1}^{10} 
c_{k_x, k_y} \, \sin(2 \pi k_x x) \, \sin(2 \pi k_y y)
\qquad c_{k_x, k_y} \sim \mathcal{U}(0, 1),
\]
where $k_x, k_y \in \{1, \dots, 10\}$ are drawn uniformly at random for each sample.
The forcing term $f(x, y)$ is set to zero along the boundary to enforce Dirichlet conditions. 

The Poisson problem
\[
-\Delta u(x, y) = f(x, y), 
\qquad u|_{\partial \Omega} = 0,
\]
is discretized using a fourth-order finite difference scheme (nine-point Laplacian stencil), with a spatial resolution of $128 \times 128$.
This resulting linear system is solved using a direct solver. 
We generate $4,200$ samples for training and $800$ for testing, and we downsample the spatial resolution to $64 \times 64$.

\textcolor{black}{We further generate $2500$ samples ($2250$ for training and $250$ for testing) of the two-dimensional Poisson equation with homogeneous Dirichlet boundary conditions on a L-shape domain. We use the same strategy as the two-dimensional Poisson equation on the box to generate the data. We only change the domain shape, removing from the top right corner of the box a square with side $1/2$.}

\subsubsection{$3$D (Data Efficiency)}

We generate datasets of solutions to the three-dimensional Poisson equation with homogeneous Dirichlet boundary conditions. 
The spatial domain is discretized uniformly with $128^3$ grid points over $(x, y, z) \in [0, 1]^3$, 
with grid spacing $h = 1 / n$ where $n \in \{128\}$. 
For each sample, the right-hand side $f(x, y, z)$ is constructed as a truncated Fourier sine series with randomly selected modes:
\[
f(x, y, z) = \sum_{k_x=1}^{20} \sum_{k_y=1}^{20} \sum_{k_z=1}^{20} 
c_{k_x, k_y, k_z} \, 
\sin(k_x \pi x) \, \sin(k_y \pi y) \, \sin(k_z \pi z),
\qquad 
c_{k_x, k_y, k_z} \sim \mathcal{U}(0, 1),
\]
where $k_x, k_y, k_z \in \{3, \dots, 23\}$ are drawn uniformly at random for each sample.
The forcing term $f(x, y, z)$ is set to zero on the boundary of the domain to impose Dirichlet conditions. 

The Poisson problem
\[
- \Delta u(x, y, z) = f(x, y, z), 
\qquad u|_{\partial \Omega} = 0,
\]
is discretized using a higher-order finite difference scheme based on a 19-point stencil, 
yielding a sparse linear system of size $n^3 \times n^3$.  The system is solved using a sparse direct solver.
We generate one dataset of $256$ samples for training and $200$ for testing.

\subsection{Heat Equation}

For the one-dimensional Heat equation, we generate 2000 trajectories (1800 for training and 200 for testing) with a time horizon $T = 8$ and time step $\delta t = 0.2$. The spatial domain is $X = [0, 1]$ with spatial resolution $\Delta x = 1/1023$ (1024 grid points) and homogeneous Dirichlet boundary conditions $u(0,t) = u(1,t) = 0$. The initial conditions are sampled from a truncated Fourier sine series with 10 modes and random coefficients $c_k \sim \mathcal{U}(0, 1)$:
\begin{equation*}
u_0(x) = \sum_{k=1}^{10} 2 \cdot c_k \cdot \sin(k\pi x)
\end{equation*}
Each initial condition is normalized to have a maximum absolute value of 1. We use a second-order finite difference scheme for spatial discretization with diffusion coefficient $D = 0.0002$, and solve the resulting system of ODEs using the BDF (Backward Differentiation Formula) method with relative tolerance $10^{-4}$ and absolute tolerance $10^{-6}$. For training and testing our model, we downsample to $256$.

\subsection{Helmholtz equation} 
\textcolor{black}{We further generate $2500$ samples ($2250$ for training and $250$ for testing) of the two-dimensional Helmholtz equation with periodic boundary conditions on a box domain. We sample the forcing term from the same distribution as the one used in the Poisson equation. Moreover, we use the same grid resolution. We fix the $\kappa=70$.}

\subsection{Burgers Equation}

For the one-dimensional Burgers equation, we generate 2000 trajectories (1800 for training and 200 for testing) with a time horizon $T = 15.0$ and output time step $\delta t = 0.2$. The spatial domain is $X = [0, 1)$ with spatial resolution $\Delta x = 1/1024$ (1024 grid points) and Dirichlet boundary conditions. 
The initial conditions are sampled using a random Fourier series with 10 modes:
\begin{equation*}
u_0(x) = \sum_{k=1}^{10} c_k \sin(2\pi(k+1)x)
\end{equation*}
where the coefficients $c_k \sim \mathcal{U}(-1, 1)$ are drawn from a uniform distribution. Each initial condition is normalized to have a maximum absolute value of 1. We employ second-order finite difference schemes for spatial discretization and solve the resulting system using the BDF (Backward Differentiation Formula) method with relative tolerances of $10^{-4}$ and absolute tolerances of $10^{-6}$. For training and testing our model, we downsample to $256$.

\subsection{Gray--Scott model}
We use the dataset from \citep{ohana2024the}, the follwoing are the datails for generating the data. Many numerical methods exist to simulate reaction–diffusion equations. If low-order finite differences are used, real-time simulations can be carried out using GPUs, with modern browser-based implementations readily available \citep{munafo2013grayscott, walker2023visualpde}. They choose to simulate with a high-order spectral method for accuracy and stability purposes. Specifically, they simulate equations (15)–(16) in two dimensions on the doubly periodic domain $[-1,1]^2$ using a Fourier spectral method implemented in the MATLAB package Chebfun \citep{driscoll2014chebfun}. The implicit–explicit exponential time-differencing fourth-order Runge–Kutta method \citep{kassam2005fourth} is used to integrate this stiff PDE in time. The Fourier spectral method is applied in space, with linear diffusion terms treated implicitly and nonlinear reaction terms treated explicitly and evaluated pseudospectrally.  

Simulations are performed using a $128 \times 128$ bivariate Fourier series over a time interval of 10,000 seconds, with a simulation time step size of 1 second. Snapshots are recorded every 10 time steps. The simulation trajectories are seeded with 200 different initial conditions: 100 random Fourier series and 100 randomly placed Gaussians. In all simulations, they set $\delta_A = 2 \times 10^{-5}$ and $\delta_B = 1 \times 10^{-5}$.  

Pattern formation is then controlled by the choice of the ``feed'' and ``kill'' parameters $f$ and $k$. They choose six different $(f,k)$ pairs which result in six qualitatively different patterns, summarized in the following table:

\begin{center}
\begin{tabular}{lcc}
\toprule
Pattern & $f$ & $k$ \\
\midrule
Gliders & 0.014 & 0.054 \\
Bubbles & 0.098 & 0.057 \\
Maze    & 0.029 & 0.057 \\
Worms   & 0.058 & 0.065 \\
Spirals & 0.018 & 0.051 \\
Spots   & 0.030 & 0.062 \\
\bottomrule
\end{tabular}
\end{center}

On 40 CPU cores, it takes 5.5 hours per set of parameters, for a total of 33 hours across all simulations.

\subsection{Navier--Stokes Equation}

We use the data provided by \citep{huang2024diffusionpde} for the two-dimensional Navier--Stokes equations. They follow this procedure to generate the data:
The initial condition $w_0$ is sampled from a Gaussian random field 
\[
w_0 \sim \mathcal{N}\!\left(0,\; 7^{1.5}(-\Delta + 49I)^{-2.5}\right).
\] 
The external forcing term is defined as
\[
q(x_1, x_2) = \tfrac{1}{10}\big(\sin(2\pi(x_1+x_2)) + \cos(2\pi(x_1+x_2))\big).
\]
We solve the Navier--Stokes equations in the stream-function formulation using a pseudo-spectral method. 
Specifically, the equations are transformed into the spectral domain via Fourier transforms, the vorticity equation is advanced in time in spectral space, and inverse Fourier transforms are applied to compute nonlinear terms in the physical domain. 
The system is simulated for $1$ second with $10$ time steps, and the vorticity field $w_t$ is stored at a spatial resolution of $128 \times 128$.
 In the dataset $\nu = 10^{-3}$ therefore the  Reynolds number  corresponding to $\mathrm{Re} = 1000$.

\section{Metrics}\label{subsec:metrics}

\paragraph{Training Loss.} As a training objective, we use the Mean Squared Error (MSE), defined as 
\begin{equation*}\label{eq:MSE}
\mathcal{L}(\textit{pred},\textit{target}) = \frac{1}{|\mathcal{B}|} \sum_{b \in \mathcal{B}} ||\textit{pred}_b-\textit{target}_b||_2^2
\end{equation*}
\paragraph{Evaluation Metrics.} 
We assess the performance of our model using two different metrics. For steady-state problems or predictions at a specific time step, we employ the relative $L^2$ error, formally defined as
\begin{equation}\label{eq:relative_l2}
\mathcal{L}(\textit{pred},\textit{target}) = \frac{1}{|\mathcal{B}|} \sum_{b \in \mathcal{B}} \frac{\|\textit{pred}_b - \textit{target}_b\|_2}{\|\textit{target}_b\|_2}.
\end{equation}
For temporal rollouts, in which the model is evaluated over a sequence of time steps, we adopt the  $L^2$ trajectory error, mathematically expressed as
\begin{equation}\label{eq:traj_error}
    \mathcal{L}(\textit{pred}, \textit{target}) = \frac{1}{|\mathcal{B}|} \sum_{b \in \mathcal{B}} \left\| \textit{pred}_b - \textit{target}_b \right\|_2.
\end{equation}

 \section{Complete plot data efficiency Poisson $3$D}\label{sec:pois_3d_data_eff}

 \begin{figure}[h]
     \centering
    \includegraphics[width=.75\linewidth]{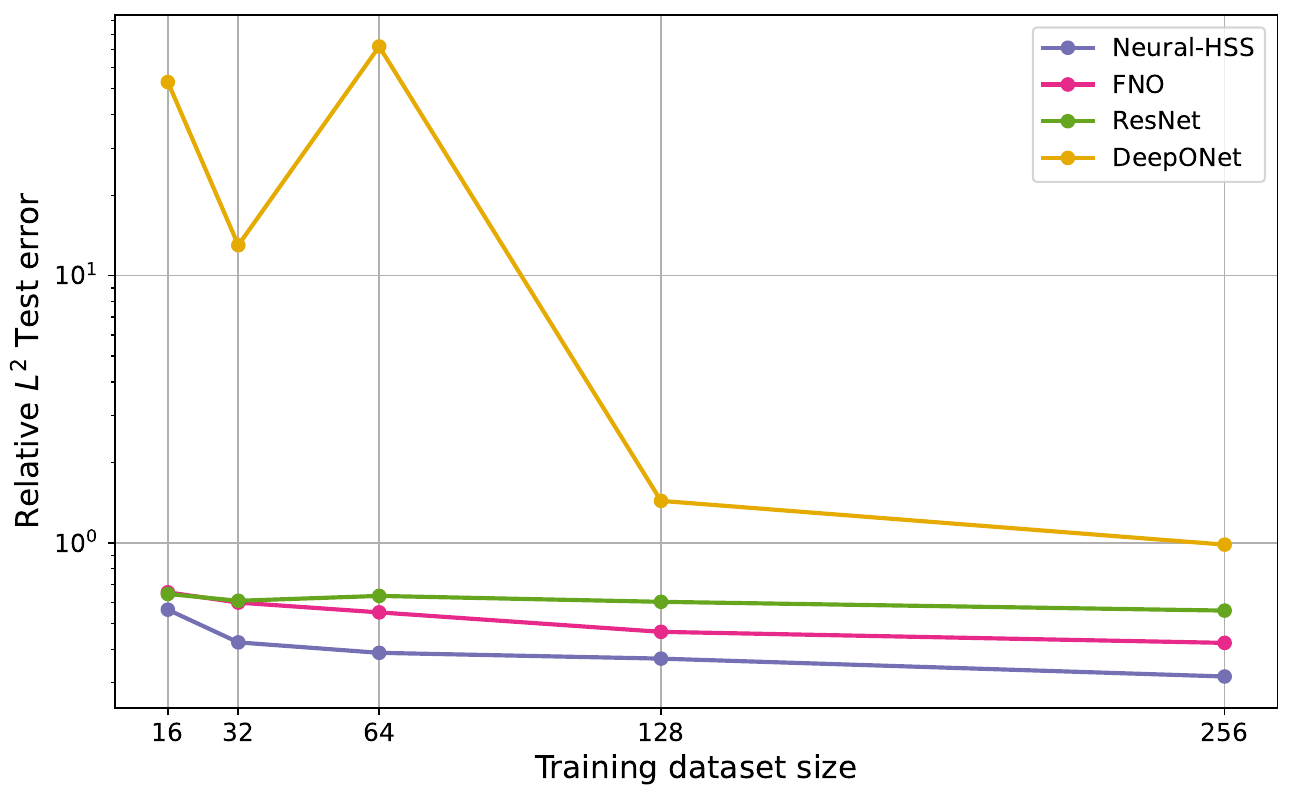}
     \caption{Train size vs relative test error for different models. The models are trained on a 3D
Poisson equation}
 \end{figure}

{
\color{black}

\section{Ablation on hyperparameters}
In this section we include an ablation study on the three main hyperparameters of the \architecturename{} layer, namely rank (r), outer rank $r_{\mathrm{out}}$ (for problems higher than $1D$), and levels $t$ of the hierarchical structure. We considered a grid of $r\in \{2,4,8,12,16\}$, $r_{\mathrm{out}} \in \{2,4,8,16,24,32\}$, $t \in \{1,2,3 \}$, and for each one of the triplets $(r,r_{\mathrm{out}},t)$ of hyperparameters we trained a \architecturename{} model with those hyperparameters to predict the stationary state of the Gray-Scott problem. In \Cref{fig:ablation_gs} we report the effect of the various couples of hyperparameters on the overall test performances. As we can observe in the plots, in the majority of the configurations, the model performs well, despite some small regions of high rank, outer rank, or levels in which the model started to overfit. Looking at the overall scale of the differences in performance, we can observe that the model is relatively stable with respect to all three hyperparameters, being the difference in terms of both test and train performance relatively small between the worst and best models observed.  

\begin{figure}[t]
\centering
\resizebox{\textwidth}{!}{
\bgroup
\setlength{\tabcolsep}{0.0mm}
\begin{tabular}{c}
   \includegraphics[width=\linewidth]{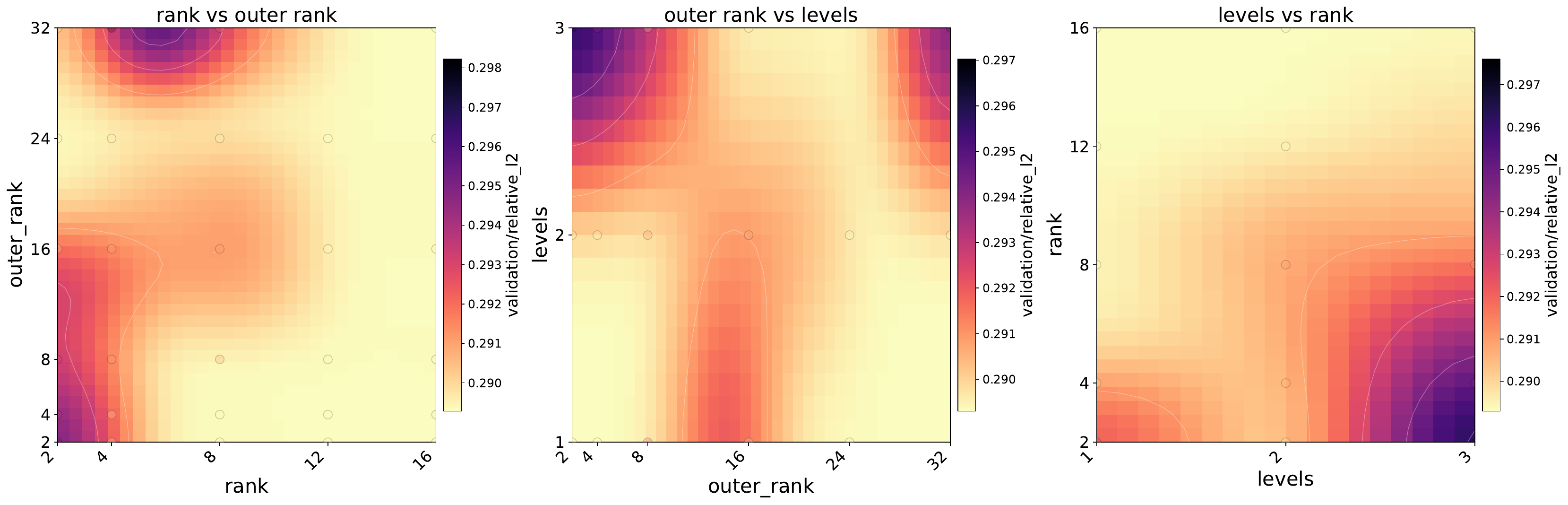}
\end{tabular}
\egroup
}

\caption{Ablation study on rank, outer rank, and levels for the Gray-Scott problem, where the color indicates the test error after training (rescaled from maximum to minimum observed, notice the scale in the colorbars).}
\label{fig:ablation_gs}
\end{figure}

\section{Additional Experiments on Burgers'}\label{sec:beta_burger}

We run the following series of experiments using $\beta \in \{1,10^{-2},10^{-4}\}$ on the modified Burgers' equation
\[
u_t + \beta u u_x = \nu u_{xx}.
\]

\begin{figure}[t]
    \centering
    \includegraphics[width=.7\linewidth]{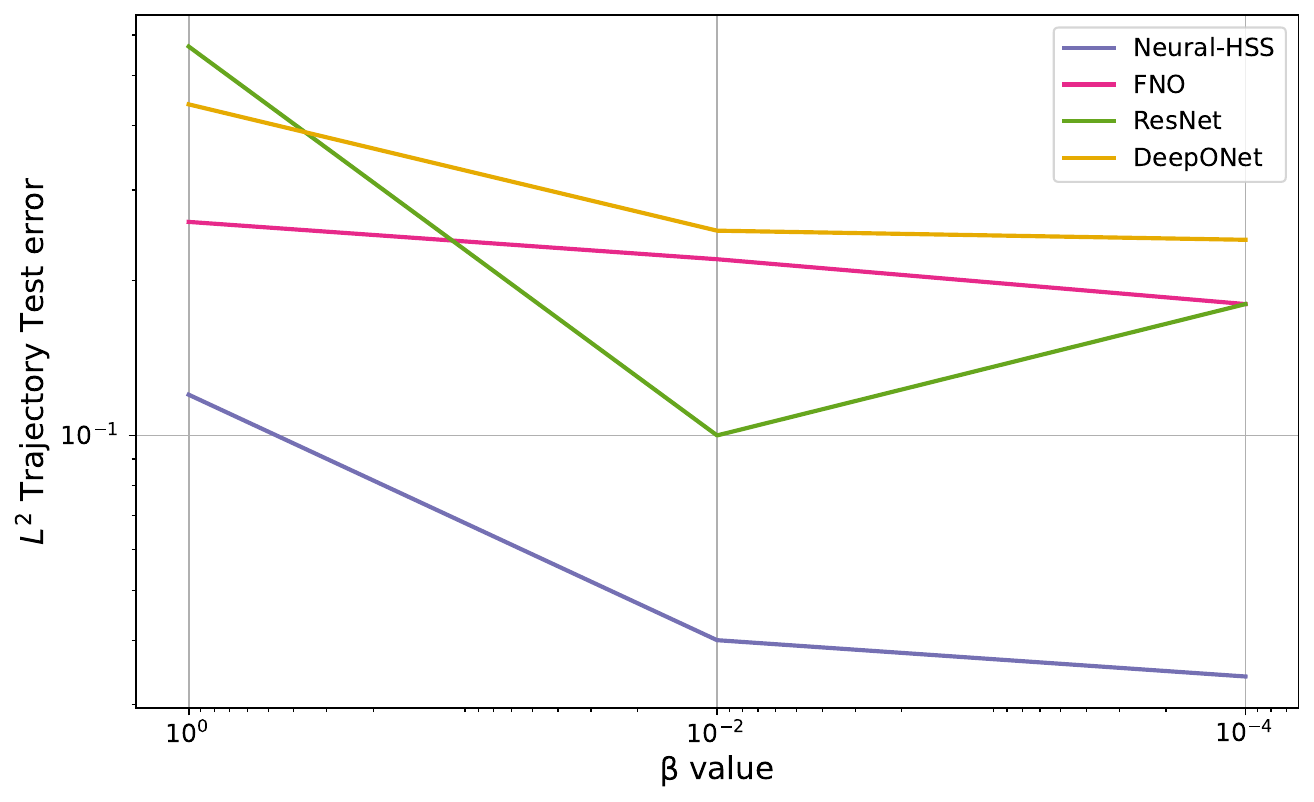}
    \caption{Trajectory test loss varying $\beta$}
    \label{fig:beta_burgers}
\end{figure}

We adopt the same experimental setting used for the standard Burgers’ equation, generating the data in the same manner and keeping the hyperparameters unchanged.
When $\beta=0$, the equation reduces to the heat equation, which is elliptic. The ResNet shows improved performance as $\beta$ decreases from $1$ to $10^{-2}$, but its accuracy deteriorates at $\beta=10^{-4}$. In contrast, the performance of FNO and DeepONet remains nearly constant across this range. The only model that benefits from the transition from the parabolic to the elliptic regime is \architecturename{}, highlighting its effectiveness in learning from data generated by elliptic PDEs.

\newpage

\section{Timing}\label{sec:forward_timing}
In this section, we present the timing results. In particular, the setting is the same as \Cref{fig:1d_eq_flops,fig:2d_eq_flops}, for which we present the time of a training step for non-embedding layers with the same batch size used in the experiments (see \Cref{sec:details_models}). In \Cref{fig:timing_forward}, we present results in the same setting but just for the inference time.

\begin{figure}[h]
\centering
\resizebox{\textwidth}{!}{
\bgroup
\setlength{\tabcolsep}{0.0mm}
\begin{tabular}{cc}
   \includegraphics[width=0.5\linewidth]{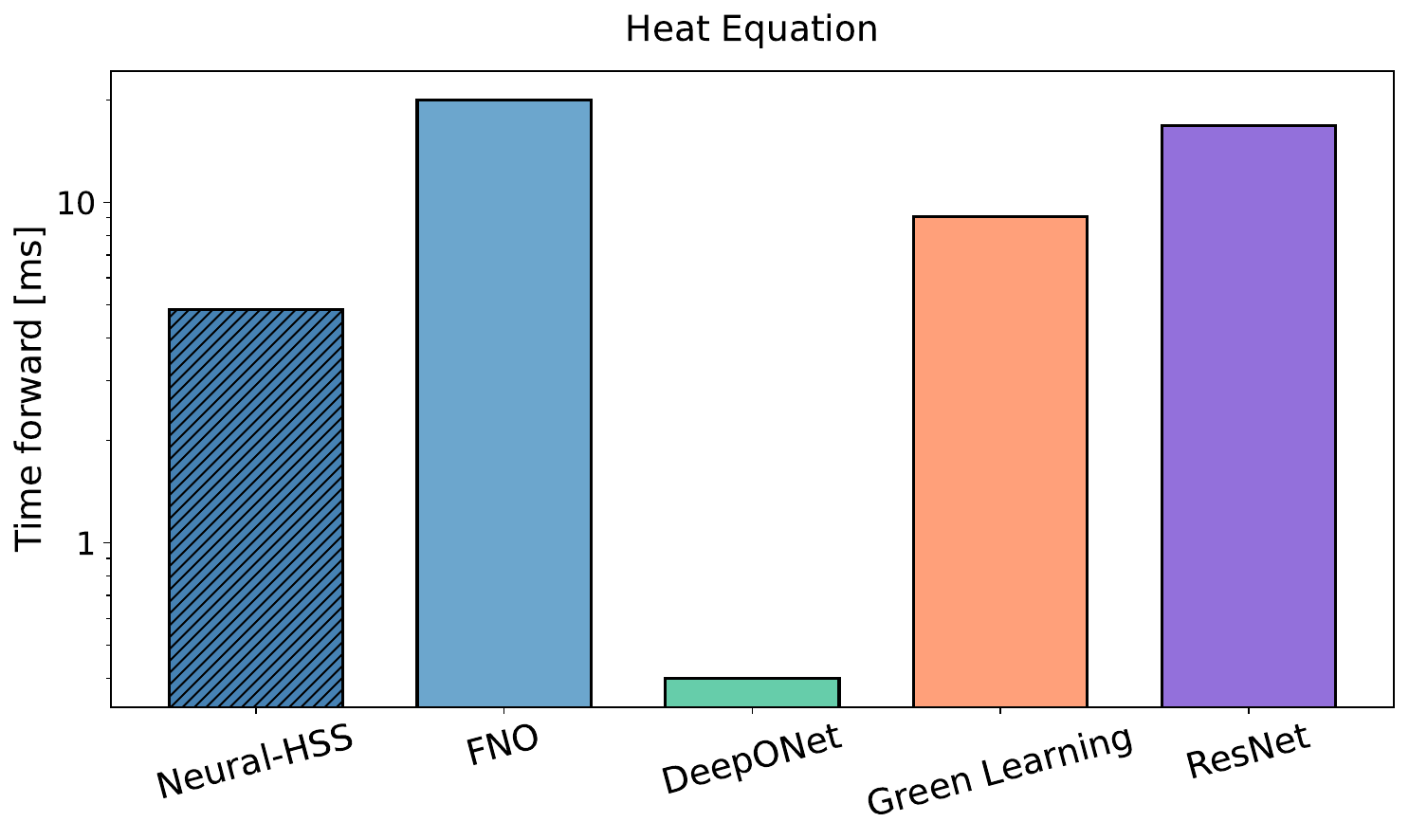}
     &
    \includegraphics[width=0.5\linewidth]{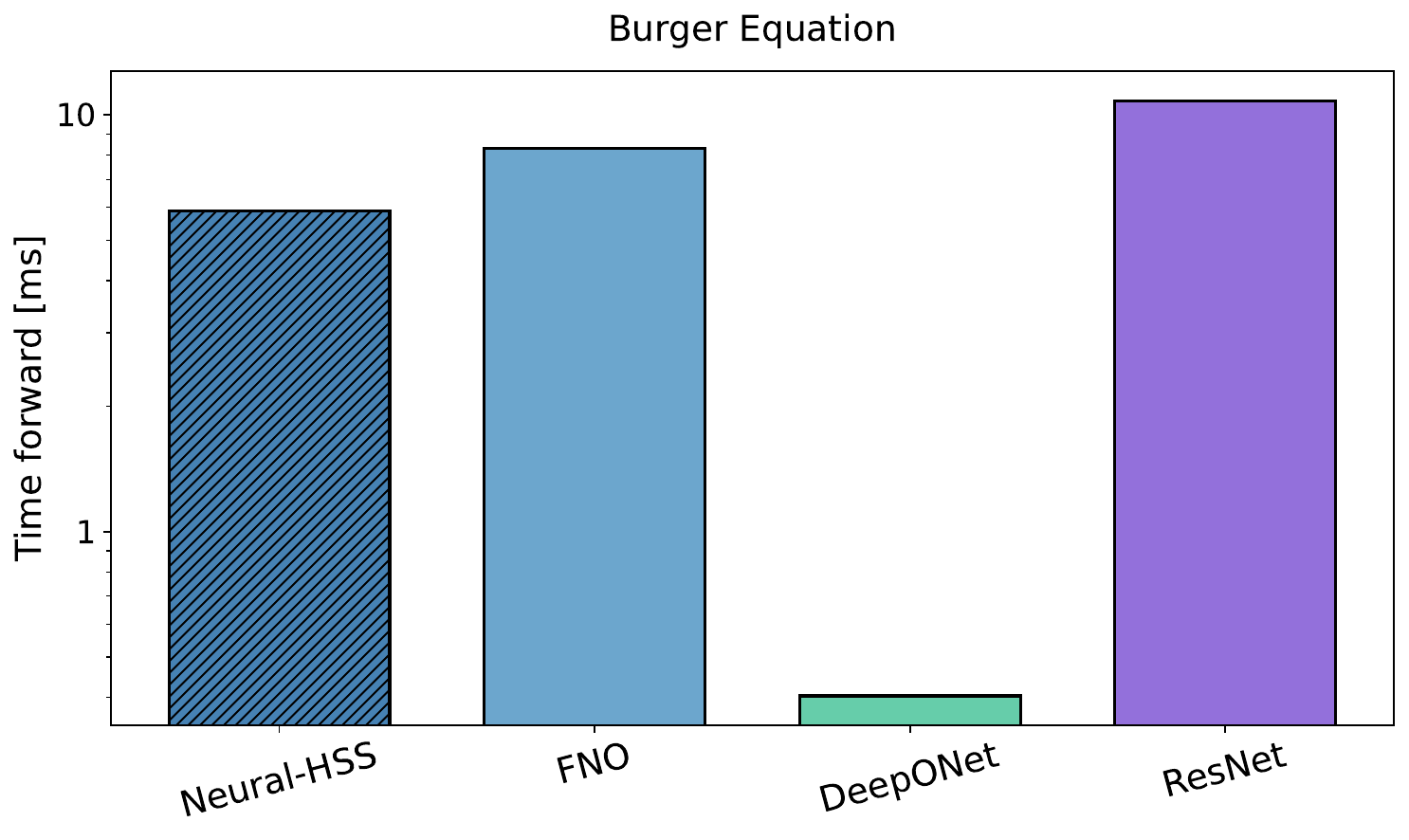} \\ 
    \includegraphics[width=0.5\linewidth]{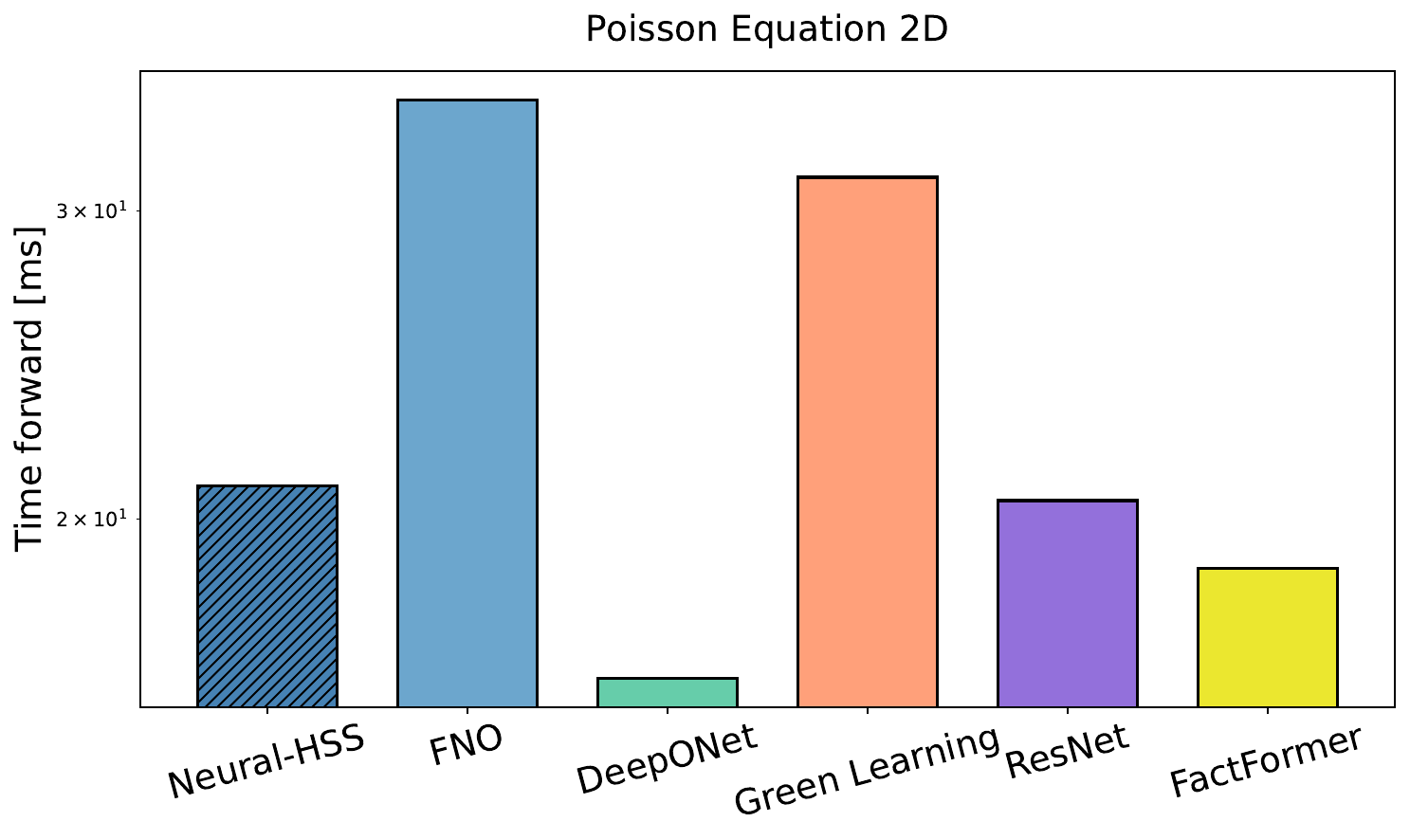}
     &
    \includegraphics[width=0.5\linewidth]{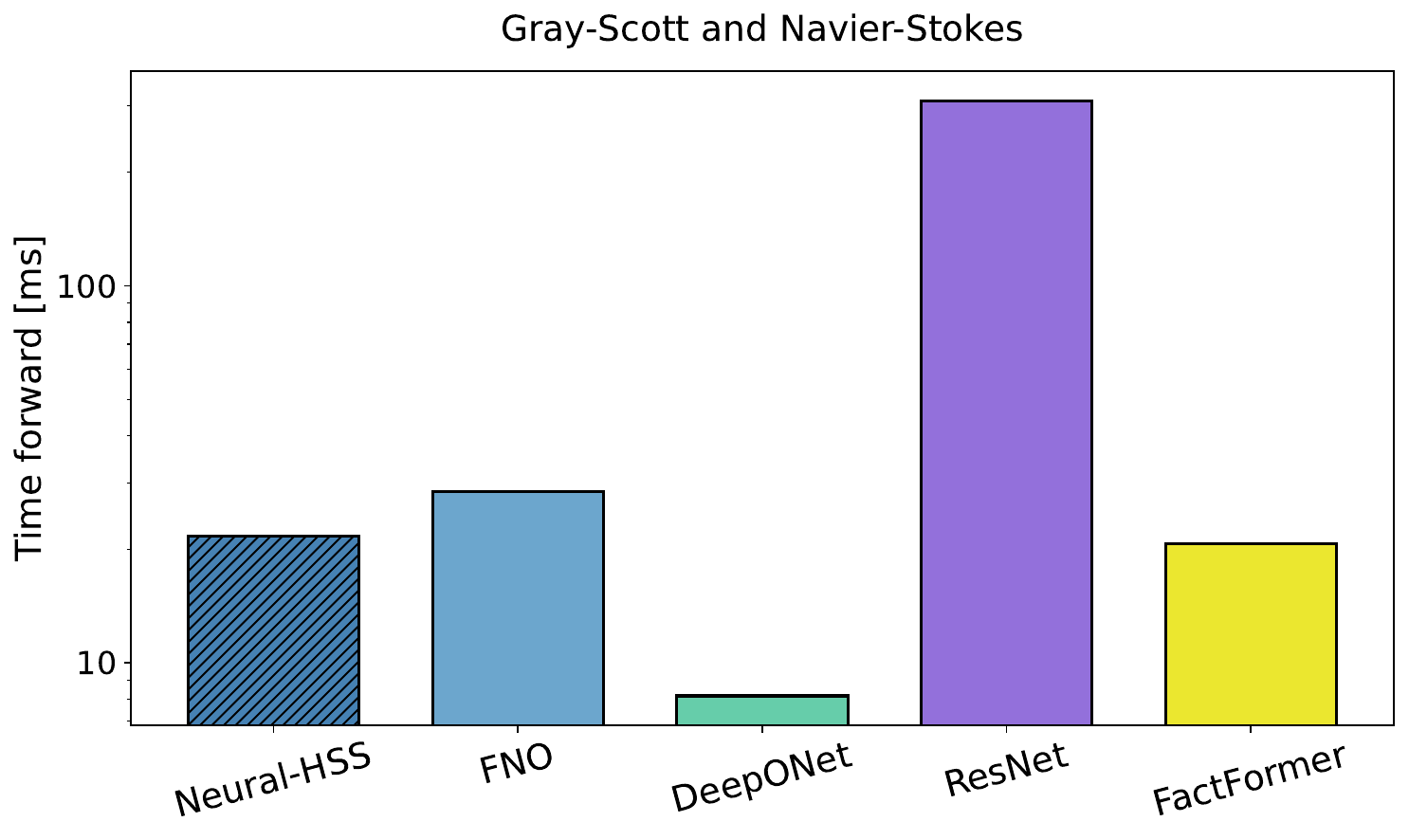} \\
\end{tabular}
\egroup
}
\caption{Timing of one forward pass, calculated on different datasets.}
\label{fig:timing_forward}
\end{figure}

\begin{figure*}[h]
\centering
\resizebox{\textwidth}{!}{
\bgroup
\setlength{\tabcolsep}{0.0mm}
\begin{tabular}{cc}
   \includegraphics[width=0.5\linewidth]{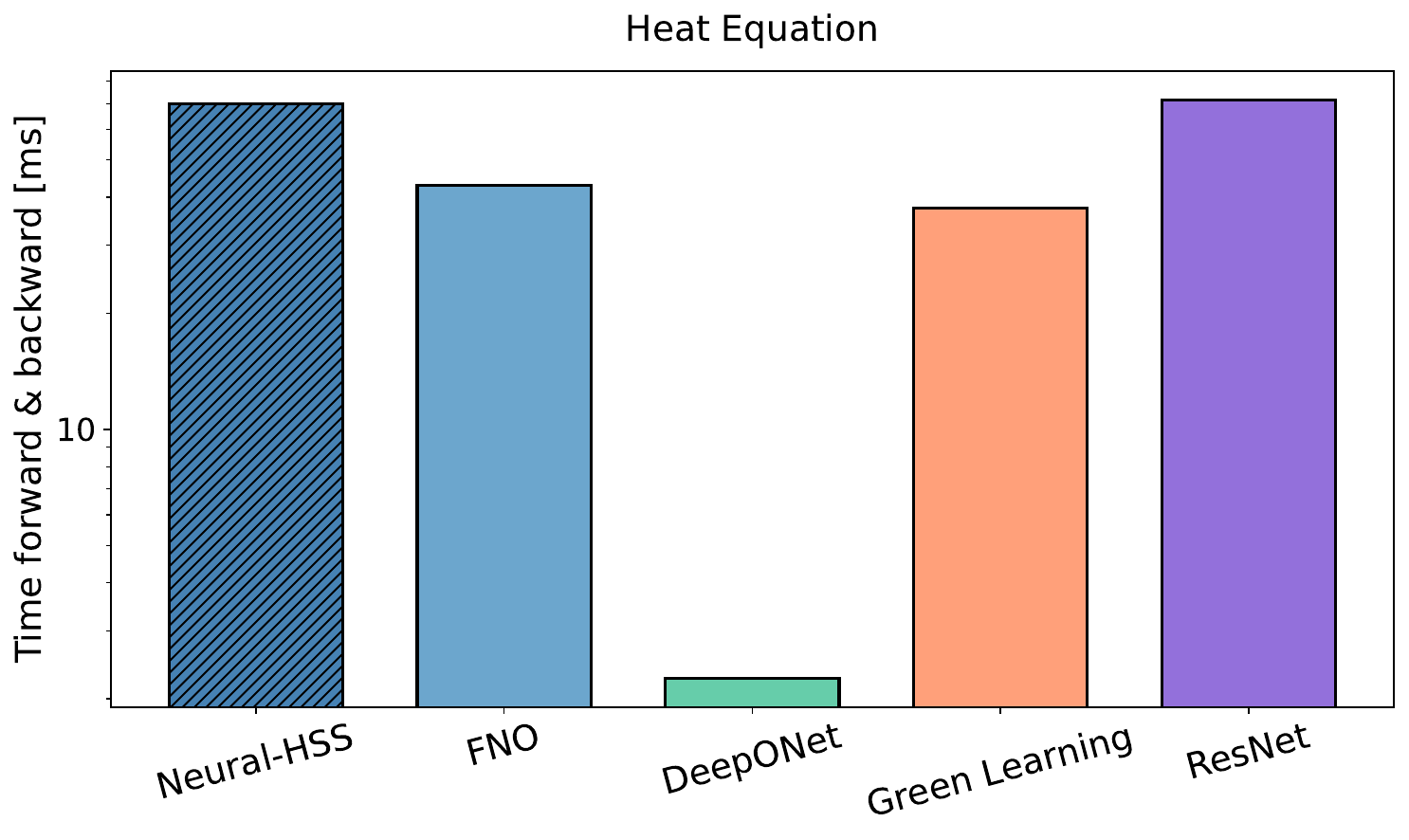}
     &
    \includegraphics[width=0.5\linewidth]{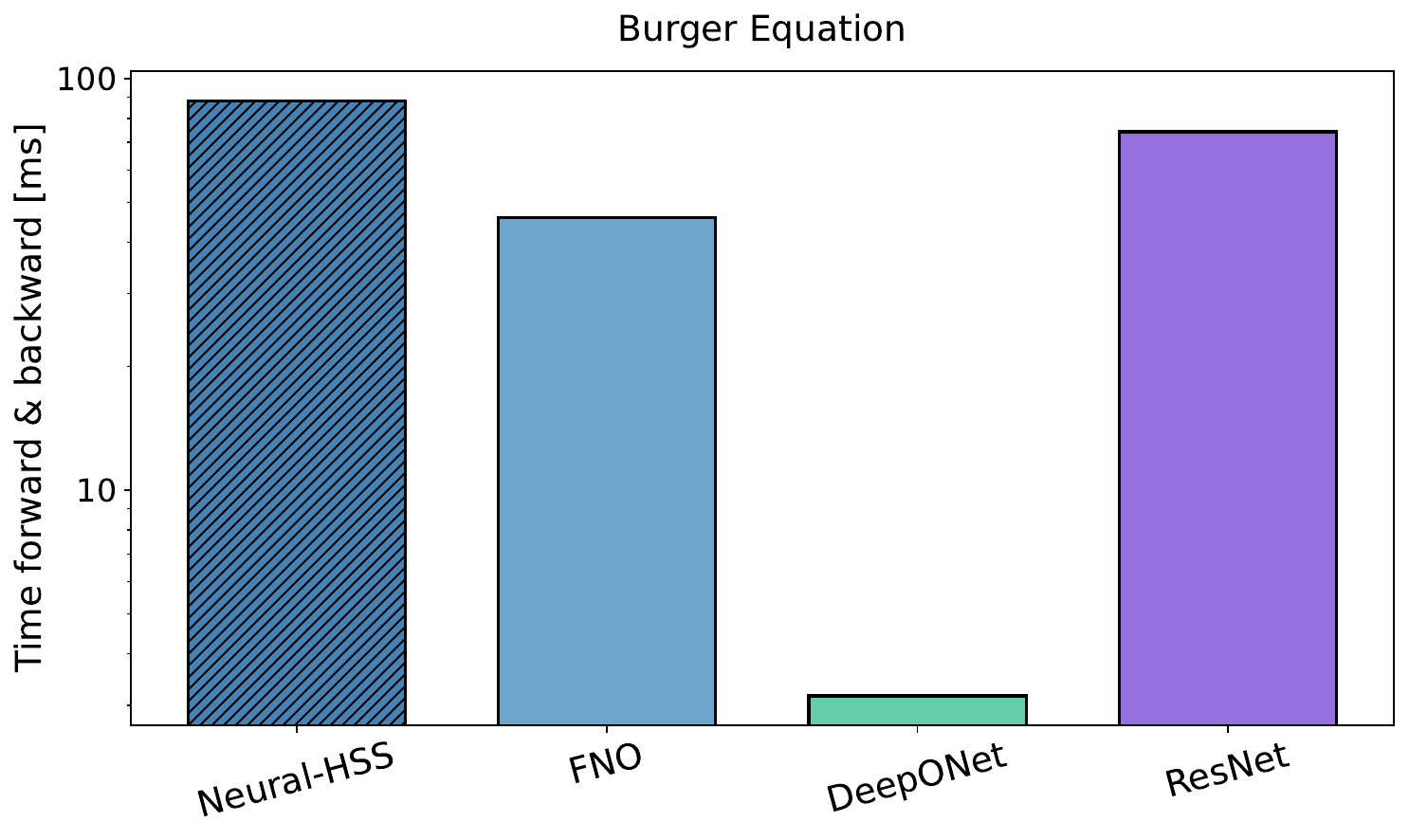} \\
\end{tabular}
\egroup
}
\caption{Timing of one forward+backward pass, calculated on two different datasets: \textbf{Left}: Heat equation. \textbf{Right}: Burgers' equation. \textcolor{black}{We refer to \Cref{sec:forward_timing} for inference only timings.}}
\label{fig:1d_eq_flops}
\end{figure*}

\begin{figure*}[h]
\centering
\resizebox{\textwidth}{!}{
\bgroup
\setlength{\tabcolsep}{0.0mm}
\begin{tabular}{cc}
   \includegraphics[width=0.5\linewidth]{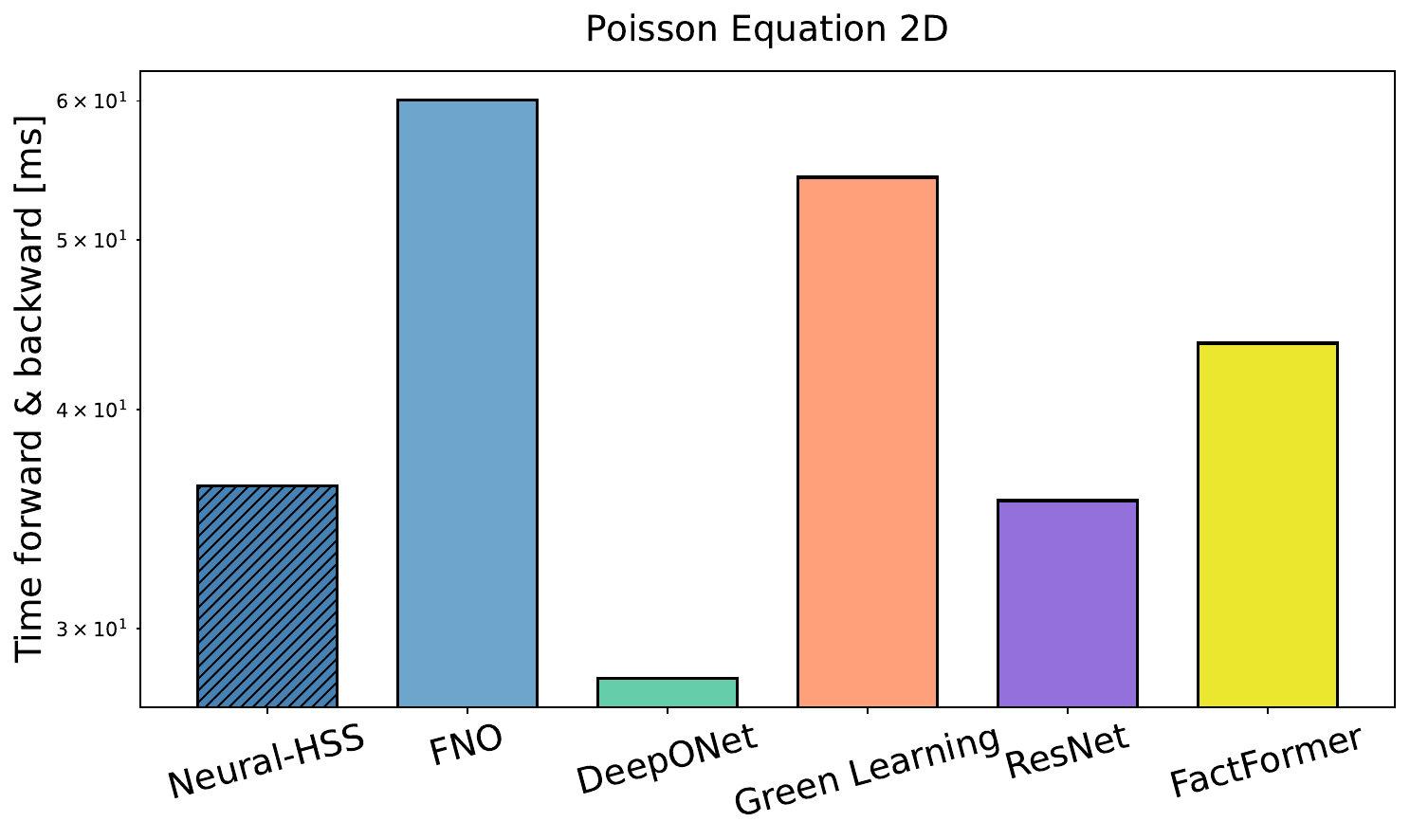}
     &
    \includegraphics[width=0.5\linewidth]{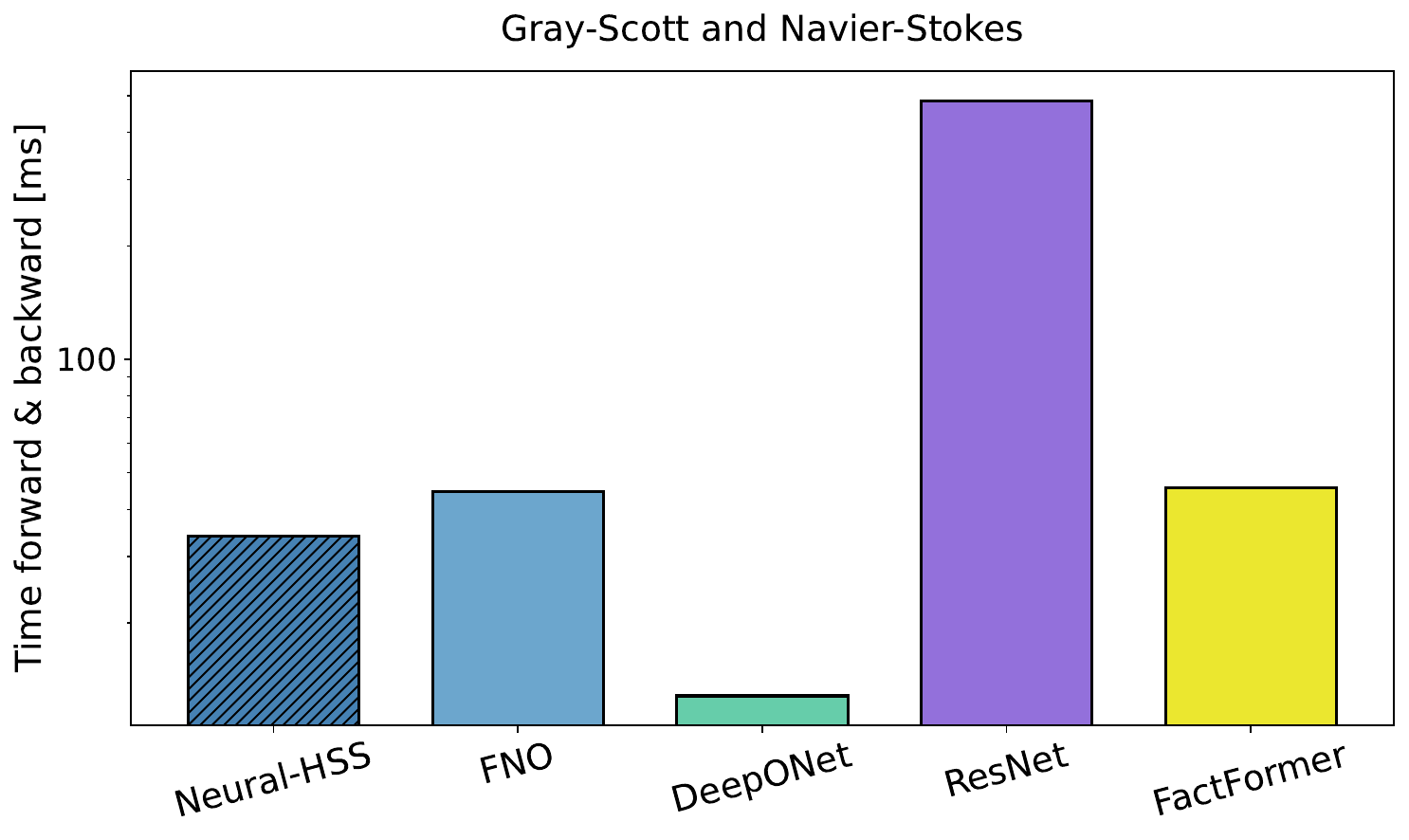} \\
\end{tabular}
\egroup
}
\caption{Timing of one forward+backward pass, calculated on two different datasets: \textbf{Left}: 2D Poisson Equation \textbf{Right}: Gray-Scott and Navier-Stokes.\textcolor{black}{We refer to \Cref{sec:forward_timing} for inference only timings.}}
\label{fig:2d_eq_flops}
\end{figure*}

\newpage

\section{Model \& training details}\label{sec:details_models}
We present in \Cref{tab:heat_hyper,tab:burgers_hyper,tab:helmholtz_hyper,tab:1d_poisson_hyper,tab:2d_poisson_hyper,tab:3d_poisson_hyper,tab:navier-stokes_hyper} details on model and training parameters for all the datasets considered.
\begin{table}[t]
\centering
\small 
\begin{tabularx}{\linewidth}{l|*{5}{>{\centering\arraybackslash}X}}
\toprule
\textbf{Configuration} & \architecturename{} & FNO & ResNet & DeepONet & Green Learning \\
\midrule
Depth & $3$ & $6$ & $12$ & $4$\footnotemark & $6$ \\
Embedding Dimension & $256$ & $32$ & $48$ & $192$ & $128$\\
Activation & Trainable LeakyReLU & GeLU & GeLU & GeLU & Rational activation \footnotemark \\
\midrule
Levels & $3$ & - & - & - & - \\
Rank & $2$ & - & - & - & - \\
Modes & - & $12$ & - & - & - \\
\midrule
Peak learning rate & $1 \times 10^{-3}$ & $1 \times 10^{-3}$ & $1 \times 10^{-3}$ & $1 \times 10^{-3}$ & $1 \times 10^{-3}$ \\
Minimum learning rate & $1 \times 10^{-5}$ & $1 \times 10^{-5}$ & $1 \times 10^{-5}$ & $1 \times 10^{-5}$ & $1 \times 10^{-5}$ \\
Weight decay & $1 \times 10^{-3}$ & $1 \times 10^{-3}$ & $1 \times 10^{-3}$ & $1 \times 10^{-3}$ & $1 \times 10^{-3}$ \\
Learning rate schedule & Cosine & Cosine & Cosine & Cosine & Cosine \\
Optimizer & AdamW & AdamW  & AdamW & AdamW & AdamW \\
$(\beta_1,\beta_2)$ & $(0.9,0.99)$ & $(0.9,0.99)$ & $(0.9,0.99)$ & $(0.9,0.99)$ & $(0.9,0.99)$ \\
Gradient clip norm & $1$ & $1$ & $1$ & $1$ & $1$ \\
Epochs & $1000$ & $1000$ & $1000$ & $1000$ & $1000$ \\
Batch size & $256$ & $256$ & $256$ & $256$ & $256$ \\
\bottomrule
\end{tabularx}
\caption{Models and training configuration for the \textbf{Heat Equation} \textcolor{black}{and \textbf{Poisson Equation} with Neumann boundary conditions.}}
\label{tab:heat_hyper}
\end{table}

\footnotetext[1]{The depth is the same for the Trunk network and the Branch network}
\footnotetext[2]{\citep{boulle2020rational}}
\begin{table}[t]
\centering
\small 
\begin{tabularx}{\linewidth}{l|*{4}{>{\centering\arraybackslash}X}}
\toprule
\textbf{Configuration} & \architecturename{} & FNO & ResNet & DeepONet \\
\midrule
Depth & $4$ & $6$ & $12$ & $4$ \\
Embedding Dimension & $1024$ & $64$ & $150$ & $512$ \\
Activation & Trainable LeakyReLU & GeLU & GeLU & GeLU \\
\midrule
Levels & $3$ & - & - & -  \\
Rank & $32$ & - & - & -  \\
Modes & - & $32$ & - & -  \\
\midrule
Peak learning rate & $5 \times 10^{-4}$ & $5 \times 10^{-4}$ & $5 \times 10^{-4}$ & $5 \times 10^{-4}$ \\
Minimum learning rate & $1 \times 10^{-6}$ & $1 \times 10^{-6}$ & $1 \times 10^{-6}$ & $1 \times 10^{-6}$ \\
Weight decay & $1 \times 10^{-3}$ & $1 \times 10^{-3}$ & $1 \times 10^{-3}$ & $1 \times 10^{-3}$ \\
Learning rate schedule & Cosine & Cosine & Cosine & Cosine  \\
Optimizer & AdamW & AdamW  & AdamW & AdamW  \\
$(\beta_1,\beta_2)$ & $(0.9,0.99)$ & $(0.9,0.99)$ & $(0.9,0.99)$ & $(0.9,0.99)$ \\
Gradient clip norm & $1$ & $1$ & $1$ & $1$ \\
Epochs & $1500$ & $1500$ & $1500$ & $1500$  \\
Batch size & $256$ & $256$ & $256$ & $256$  \\
\bottomrule
\end{tabularx}
\caption{Models and training configuration for the \textbf{Burgers' Equation}.}
\label{tab:burgers_hyper}
\end{table}

\begin{table}[t]
\centering
\small 
\begin{tabularx}{\linewidth}{l|*{6}{>{\centering\arraybackslash}X}}
\toprule
\textbf{Configuration} & \architecturename{} & FNO & ResNet & DeepONet & Green Learning & FactFormer\\
\midrule
Depth & $3$ & $4$ & $4$ & $4$\footnotemark & $6$ & $1$ \\
Embedding Dimension & $64$ & $16$ & $64$ & $64$ & $64$ & $384$\\
Activation & Trainable LeakyReLU & GeLU & GeLU & GeLU & Rational activation\footnotemark  & GeLU \\
\midrule
Levels & $2$ & - & - & - & - & - \\
Rank & $2$ & - & - & - & - & -\\
outer rank & $8$ & - & - & - & - & -\\
Modes & - & $8$ & - & - & - & -\\
Head dim & - & - & - & - & - &  $8$\\
Head Num & - & - & - & - & - & $1$\\
Kern mult & - & - & - & - & - & $1$\\
\midrule
Peak learning rate & $8 \times 10^{-4}$ & $8 \times 10^{-4}$ & $8 \times 10^{-4}$ & $8 \times 10^{-4}$ & $8 \times 10^{-4}$ & $8 \times 10^{-4}$ \\
Minimum learning rate & $1 \times 10^{-5}$ & $1 \times 10^{-5}$ & $1 \times 10^{-5}$ & $1 \times 10^{-5}$ & $1 \times 10^{-5}$ & $1 \times 10^{-5}$\\
Weight decay & $1 \times 10^{-5}$ & $1 \times 10^{-5}$ & $1 \times 10^{-5}$ & $1 \times 10^{-5}$ & $1 \times 10^{-5}$ & $1 \times 10^{-5}$ \\
Learning rate schedule & Cosine & Cosine & Cosine & Cosine & Cosine & Cosine\\
Optimizer & AdamW & AdamW  & AdamW & AdamW & AdamW & AdamW\\
$(\beta_1,\beta_2)$ & $(0.9,0.99)$ & $(0.9,0.99)$ & $(0.9,0.99)$ & $(0.9,0.99)$ & $(0.9,0.99)$ & $(0.9,0.99)$\\
Gradient clip norm & $1$ & $1$ & $1$ & $1$ & $1$ & $1$\\
Epochs & $500$ & $500$ & $500$ & $500$ & $500$ & $500$\\
Batch size & $128$ & $128$ & $128$ & $128$ & $128$ & $128$\\
\bottomrule
\end{tabularx}
\caption{Models and training configuration for the \textbf{2D Poisson Equation} \textcolor{black}{ and \textbf{2D Poisson Equation (L-shape domain)}} .}
\label{tab:2d_poisson_hyper}
\end{table}

\begin{table}[t]
\centering
\small 
\begin{tabularx}{\linewidth}{l|*{6}{>{\centering\arraybackslash}X}}
\toprule
\textbf{Configuration} & \architecturename{} & FNO & ResNet & DeepONet & Green Learning \\
\midrule
Depth & $3$ & $4$ & $4$ & $4$\footnotemark & $6$ & $1$ \\
Embedding Dimension & $64$ & $16$ & $64$ & $64$ & $64$ & $384$\\
Activation & Trainable LeakyReLU & GeLU & GeLU & GeLU & Rational activation\footnotemark  & GeLU \\
\midrule
Levels & $2$ & - & - & - & - & - \\
Rank & $2$ & - & - & - & - & -\\
outer rank & $8$ & - & - & - & - & -\\
Modes & - & $8$ & - & - & - & -\\
Head dim & - & - & - & - & - &  $8$\\
Head Num & - & - & - & - & - & $1$\\
Kern mult & - & - & - & - & - & $1$\\
\midrule
Peak learning rate & $8 \times 10^{-4}$ & $8 \times 10^{-4}$ & $8 \times 10^{-4}$ & $8 \times 10^{-4}$ & $8 \times 10^{-4}$ & $8 \times 10^{-4}$ \\
Minimum learning rate & $1 \times 10^{-5}$ & $1 \times 10^{-5}$ & $1 \times 10^{-5}$ & $1 \times 10^{-5}$ & $1 \times 10^{-5}$ & $1 \times 10^{-5}$\\
Weight decay & $1 \times 10^{-5}$ & $1 \times 10^{-5}$ & $1 \times 10^{-5}$ & $1 \times 10^{-5}$ & $1 \times 10^{-5}$ & $1 \times 10^{-5}$ \\
Learning rate schedule & Cosine & Cosine & Cosine & Cosine & Cosine & Cosine\\
Optimizer & AdamW & AdamW  & AdamW & AdamW & AdamW & AdamW\\
$(\beta_1,\beta_2)$ & $(0.9,0.99)$ & $(0.9,0.99)$ & $(0.9,0.99)$ & $(0.9,0.99)$ & $(0.9,0.99)$ & $(0.9,0.99)$\\
Gradient clip norm & $1$ & $1$ & $1$ & $1$ & $1$ & $1$\\
Epochs & $1000$ & $1000$ & $1000$ & $1000$ & $1000$ & $1000$\\
Batch size & $64$ & $64$ & $64$ & $64$ & $64$ & $64$\\
\bottomrule
\end{tabularx}
\caption{\textcolor{black}{Models and training configuration for the \textbf{Helmholtz equation} .}}
\label{tab:helmholtz_hyper}
\end{table}

\begin{table}[t]
\centering
\small 
\begin{tabularx}{\linewidth}{l|*{5}{>{\centering\arraybackslash}X}}
\toprule
\textbf{Configuration} & \architecturename{} & FNO & ResNet & DeepONet & FactFormer \\
\midrule
Depth & $3$ & $4$ & $4$ & $4$ & $1$ \\
Embedding Dimension & $128 \times 128$ & $32$ & $64$ & $128$ & $384$ \\
Activation & Trainable LeakyReLU & GeLU & GeLU & GeLU & GeLU \\
\midrule
Levels & $2$ & - & - & - & -  \\
Rank & $8$ & - & - & - & -  \\
Outer Rank & $8$ & - & - & - & -  \\
Modes & - & $32$ & - & - & -  \\
Head dim & - & - & - & -  &  $8$\\
Head Num & - & - & - & -  & $1$\\
Kern mult & - & - & - & -  & $1$\\
\midrule
Peak learning rate & $1 \times 10^{-3}$ & $1 \times 10^{-3}$ & $1 \times 10^{-3}$ & $1 \times 10^{-3}$ & $1 \times 10^{-3}$\\
Minimum learning rate & $1 \times 10^{-5}$ & $1 \times 10^{-5}$ & $1 \times 10^{-5}$ & $1 \times 10^{-5}$ & $1 \times 10^{-5}$\\
Weight decay & $0$ & $0$ & $0$ & $0$ & $0$\\
Learning rate schedule & Cosine & Cosine & Cosine & Cosine & Cosine \\
Optimizer & AdamW & AdamW  & AdamW & AdamW & AdamW \\
$(\beta_1,\beta_2)$ & $(0.9,0.99)$ & $(0.9,0.99)$ & $(0.9,0.99)$ & $(0.9,0.99)$& $(0.9,0.99)$ \\
Gradient clip norm & $1$ & $1$ & $1$ & $1$& $1$ \\
Epochs & $10$ & $10$ & $10$ & $10$ & $10$ \\
Batch size & $64$ & $64$ & $64$ & $64$ & $64$ \\
\bottomrule
\end{tabularx}
\caption{Models and training configuration for the \textbf{Incompressible Navier-Stokes and Gray-Scott}.}
\label{tab:navier-stokes_hyper}
\end{table}

\begin{table}[t]
\centering
\small 
\begin{tabularx}{\linewidth}{l|*{4}{>{\centering\arraybackslash}X}}
\toprule
\textbf{Configuration} & \architecturename{} & FNO & ResNet & DeepONet \\
\midrule
Depth & $1$ & $2$ & $2$ & $4$ \\
Embedding Dimension & $128 \times 128 \times 128$ & $14$ & $48$ & $128$ \\
Activation & Trainable LeakyReLU & GeLU & GeLU & GeLU \\
\midrule
Levels & $2$ & - & - & -  \\
Rank & $4$ & - & - & -  \\
Outer Rank & $2$ & - & - & -  \\
Modes & - & $8$ & - & -  \\
\midrule
Peak learning rate & $5 \times 10^{-3}$ & $5 \times 10^{-3}$ & $1 \times 10^{-4}$ & $1 \times 10^{-3}$ \\
Minimum learning rate & $1 \times 10^{-3}$ & $1 \times 10^{-3}$ & $1 \times 10^{-5}$ & $1 \times 10^{-5}$ \\
Weight decay & $0$ & $0$ & $0$ & $0$ \\
Learning rate schedule & Cosine & Cosine & Cosine & Cosine  \\
Optimizer & AdamW & AdamW  & AdamW & AdamW  \\
$(\beta_1,\beta_2)$ & $(0.9,0.99)$ & $(0.9,0.99)$ & $(0.9,0.99)$ & $(0.9,0.99)$ \\
Gradient clip norm & $1$ & $1$ & $1$ & $1$ \\
Trainig steps & $16$k & $16$k & $16$k & $16$k  \\
Batch size & $16$ & $16$ & $16$ & $16$  \\
\bottomrule
\end{tabularx}
\caption{Models and training configuration for the \textbf{data efficiency for 3D Poisson Equation}. We report the training steps since for each training size we match the training steps at $16$k.}
\label{tab:3d_poisson_hyper}
\end{table}

\begin{table}[t]
\centering
\small 
\begin{tabularx}{\linewidth}{l|*{5}{>{\centering\arraybackslash}X}}
\toprule
\textbf{Configuration} & \architecturename{} & FNO & ResNet & DeepONet & Green Learning \\
\midrule
Depth & $3$ & $6$ & $12$ & $4$\footnotemark & $6$ \\
Embedding Dimension & $256$ & $32$ & $48$ & $192$ & $128$\\
Activation & Trainable LeakyReLU & GeLU & GeLU & GeLU & Rational activation \footnotemark \\
\midrule
Levels & $3$ & - & - & - & - \\
Rank & $2$ & - & - & - & - \\
Modes & - & $12$ & - & - & - \\
\midrule
Peak learning rate & $1 \times 10^{-3}$ & $1 \times 10^{-3}$ & $1 \times 10^{-3}$ & $1 \times 10^{-3}$ & $1 \times 10^{-3}$ \\
Minimum learning rate & $1 \times 10^{-5}$ & $1 \times 10^{-5}$ & $1 \times 10^{-5}$ & $1 \times 10^{-5}$ & $1 \times 10^{-5}$ \\
Weight decay & $1 \times 10^{-3}$ & $1 \times 10^{-3}$ & $1 \times 10^{-3}$ & $1 \times 10^{-3}$ & $1 \times 10^{-3}$ \\
Learning rate schedule & Cosine & Cosine & Cosine & Cosine & Cosine \\
Optimizer & AdamW & AdamW  & AdamW & AdamW & AdamW \\
$(\beta_1,\beta_2)$ & $(0.9,0.99)$ & $(0.9,0.99)$ & $(0.9,0.99)$ & $(0.9,0.99)$ & $(0.9,0.99)$ \\
Gradient clip norm & $1$ & $1$ & $1$ & $1$ & $1$ \\
Epochs & $500$ & $500$ & $500$ & $500$ & $500$ \\
Batch size & $256$ & $256$ & $256$ & $256$ & $256$ \\
\bottomrule
\end{tabularx}
\caption{Models and training configuration for the \textbf{data efficiency for 1D Poisson Equation}.}
\label{tab:1d_poisson_hyper}
\end{table}

\end{document}

%% file: math_commands.tex
\usepackage{amsmath,amsfonts,bm}

\def\eqref#1{equation~\ref{#1}}

\def\1{\bm{1}}

\DeclareMathAlphabet{\mathsfit}{\encodingdefault}{\sfdefault}{m}{sl}
\SetMathAlphabet{\mathsfit}{bold}{\encodingdefault}{\sfdefault}{bx}{n}

\newcommand{\R}{\mathbb{R}}

